\newtheorem{theorem}{Theorem}
\newtheorem{lemma}{Lemma}
\theoremstyle{definition}
\newcommand{\tabincell}[2]{\begin{tabular}{@{}#1@{}}#2\end{tabular}}
\title{Tucker Decomposition Network: Expressive Power and Comparison}
\author{
	Ye LIU\\
	  Department of Mathematics\\
	  Hong Kong Baptist University \\
	  Hong Kong\\
	  \texttt{16482549@life.hkbu.edu.hk}
	  \and
Junjun Pan\thanks{Junjun Pan and Ye Liu contribute equally to this paper}\\
	Department of Mathematics\\ and Operational Research \\
 Universit\'e de Mons\\Belgium\\
	\texttt{junjun.pan@umons.ac.be}
	\and
Michael Ng\thanks{corresponding author}\\
Department of Mathematics\\
Hong Kong Baptist University \\
Hong Kong\\
\texttt{mng@math.hkbu.edu.hk} }
\date{}
\begin{document}
\maketitle

\begin{abstract}

Deep neural networks have achieved a great success in solving many machine learning and computer vision problems.
The main contribution of this paper is to develop a deep network based on Tucker tensor decomposition, and analyze its expressive power.
It is shown that the expressiveness of Tucker network is more powerful than that of shallow network.
In general, it is required to use an exponential number of nodes in a shallow network in order to represent a Tucker network.
Experimental results are also given to compare the performance of the proposed Tucker network with
hierarchical tensor network and shallow network, and demonstrate the usefulness of Tucker network in image
classification problems.
\end{abstract}

\section{Introduction}

Deep neural networks have achieved a great success in solving many practical problems. Deep learning methods are based on multiple levels of representation in learning. Each level involves simple but non-linear units for learning. Many deep learning networks have been developed and applied in various applications successfully. For example, convolutional neural networks (CNNs) \cite{he2016deep,lecun1995convolutional,szegedy2015going} have been well applied in computer vision problems, recurrent neural networks (RNNs) \cite{gers1999learning,graves2013speech,mikolov2011extensions} are used in audio and natural language processing. For more detailed discussions, see \cite{lecun2015deep} and its references.

In the recent years, more and more works focus on the theoretical explanations of neural networks. One important topic is the expressive power, i.e., comparing the expressive ability of different neural networks architectures. In the literature
\cite{delalleau2011shallow,eldan2016power,hastad1986almost,haastad1991power,martens2014expressive,montufar2014number,pascanu2013number,poggio2015theory,telgarsky2016benefits}, researches have been done in the investigation of the depth efficiency of neural networks. It is natural to claim that a deep network can be more
powerful in the expressiveness than a shallow network. Recently, Khrulkov et al. \cite{khrulkov2017expressive} applied a tensor train decomposition to exploit the expressive power of RNNs experimentally. In \cite{cohen2016expressive}, Cohen et al. theoretically analyzed
specific shallow convolutional network by using CP decomposition and specific deep convolutional network based on
hierarchical tensor decomposition. The result of the paper is that the expressive power of such deep convolutional networks
is significantly better than that of shallow networks. Cohen et al. in \cite{cohen2016convolutional} generalized convolutional arithmetic circuits into convolutional rectifier networks to handle activation functions, like ReLU. They showed that the depth efficiency of convolutional rectifier networks is weaker than that of convolutional arithmetic circuits.

Although many attempts in theoretical analysis success, the understanding of expressiveness is still needed to be developed. The main contribution of this paper is that a new deep network based on Tucker tensor decomposition is proposed. We analyze the expressive power of the new network, and show that it is required to use an exponential number of nodes in a shallow network  to represent a Tucker network. Moreover, we compare the performance of the proposed Tucker network, hierarchical tensor network and shallow network on two datasets (Mnist and CIFAR) and demonstrate that the proposed Tucker network outperforms the other two networks.

The rest of this paper is organized as follows. In Section 2, we briefly review tensor decompositions. We present the proposed Tucker network and show its expressive power in Section 3. In Section 4, experimental results are presented to demonstrate the performance of the Tucker network. Some concluding remarks are given in Section 5.

\section{Tensor Decomposition}

A $N$-dimensional tensor $\mathcal{A}$ is a multidimensional array, i.e., $\mathcal{A}\in \mathbb{R}^{M_1\times M_2\times \cdots\times M_N}$. Its $i$-th unfolding matrix is defined as $A_{(i)}\in \mathbb{R}^{M_i\times M_{i+1}\cdots M_{N}M_{1}\cdots M_{i-1}}$.  Given an index subset $\mathbf{p}=\{p_1,p_2,\cdots,p_{n_1}\}$ and the corresponding compliment set $\mathbf{q}=\{q_1,q_2,\cdots, q_{ n_2}\}$, $n_1+n_2=N$, the $(p,q)$-matricization of $\mathcal{A}$ is denoted as a matrix $[\mathcal{A}]^{(p,q)}\in \mathbb{R}^{M_{p_{1}} \cdots M_{p_{n_{1}}}\times M_{q_1}\cdots M_{q_{ n_2}}}$, obtained by reshaping tensor $\mathcal{A}$ into matrix.

We also introduce two important operators in tensor analysis, tensor product and Kronecker product. Given tensors $\mathcal{A}$ and $\mathcal{B}$ of order $N_1$ and $N_2$ respectively, the  tensor product is defined as $(\mathcal{A}\circ\mathcal{B})_{d_1,\cdots,d_{N_1+N_2}}=\mathcal{A}_{d_1,\cdots,d_{N_1}}\mathcal{B}_{d_{N_1+1},\cdots,d_{N_1+N_2}}$. Note that when $N_1=N_2=1$, the tensor product is the outer product of vectors. $\otimes $ denotes Kronecker product which is an operation on two matrices, i.e., for matrices $\mathbf{A}\in \mathbb{R}^{m_1\times n_1}$, $\mathbf{B}\in \mathbb{R}^{m_2\times n_2}$, $\mathbf{A}\otimes \mathbf{B}\in \mathbb{R}^{m_1m_2\times n_1n_2}$, defined by $(\mathbf{A}\otimes \mathbf{B})_{m_2(r-1)+v,n_2(s-1)+w}=a_{rs}b_{vw}$.

Moreover, we use $[k]$ to denote the set $\{1,\cdots, k\}$ for simplicity.

In the following,  we review some well-known tensor decomposition methods and related convolutional networks.

\textbf{CP decomposition:} \cite{carroll1970analysis,harshman1970foundations} Given a tensor $\mathcal{A}\in \mathbb{R}^{M_1\times M_2\times \cdots\times M_N}$, the CANDECOMP/PARAFAC decomposition (CP) is defined as follows:
\begin{equation}\label{CP}
\mathcal{A}=\sum^Z_{z=1}\lambda_z \mathbf{a}^{z,1}\circ \mathbf{a}^{z,2} \circ \cdots \mathbf{a}^{z,N}, \ {\rm i.e.}, \
A_{d_1,d_2,\cdots,d_N}=\sum^Z_{z=1}\lambda_z \mathbf{a}^{z,1}_{d_1} \mathbf{a}^{z,2}_{d_2} \cdots \mathbf{a}^{z,N}_{d_N},
\end{equation}
where $\mathbf{\lambda}^y\in \mathbb{R}^Z$, $\mathbf{a}^{z,i}\in \mathbb{R}^{M_i}$. The minimal value of $Z$ such that CP decomposition exists is called the CP rank of $\mathcal{A}$ denoted as $rank_{CP}(\mathcal{A})=Z$.


\textbf{Tucker decomposition:}\cite{de2000multilinear, tucker1966some}
Given a tensor $\mathcal{A}\in \mathbb{R}^{M_1\times M_2\times \cdots\times M_N}$, the Tucker decomposition is defined as follows:
\begin{equation*}
\mathcal{A}=\mathcal{G}\times_1 \mathbf{U}^{(1)}\times_2 \mathbf{U}^{(2)}\times\cdots\times_N \mathbf{U}^{(N)}, \ {\rm i.e.}, \ A_{d_1, \cdots,d_N}=\sum_{j_1,\cdots,j_N}g_{j_1,\cdots,j_N}U^{(1)}_{d_1,j_1}\cdots U^{(N)}_{d_N,j_N},
\end{equation*}
which can be written as,
\begin{equation}\label{e1}
\mathcal{A}=\sum_{j_1,j_2,\cdots,j_N}g^y_{j_1,j_2,\cdots,j_N}(\mathbf{u}^{(1)}_{j_1}\circ \mathbf{u}^{(2)}_{j_2}\circ \cdots\circ \mathbf{u}^{(N)}_{j_N}),
\end{equation}
where
$\mathcal{G}=(g_{j_1,j_2,\cdots, j_N})\in \mathbb{R}^{J_1\times J_2\times\cdots\times J_N}$, $\mathbf{u}^{(1)}_{j_1}\in \mathbb{R}^{M}$ , $\cdots$, $\mathbf{u}^{(N)}_{j_N}\in \mathbb{R}^{M}$, $j_1\in [J_1]$, $\cdots$, $j_N\in [J_N]$. The minimal value of $(J_1,J_2,\cdots,J_N)$ such that (\ref{e1}) holds is called Tucker rank of $\mathcal{A}$, denoted as $rank_{Tucker}(\mathcal{A})=(J_1,J_2,\cdots,J_N)$. If $rank_{Tucker}(\mathcal{A})=(J,\cdots J)$, we simplicity denoted as $rank_{Tucker}(\mathcal{A})=J$.

\textbf{HT decomposition:} The Hierarchical Tucker (HT) Tensor format is a multilevel variant of a tensor decomposition format. The
definition requires the introduction of a tree. For detailed discussion, see
\cite{grasedyck2010hierarchical, grasedyck2011introduction,hackbusch2012tensor}.
Given a tensor $\mathcal{A}\in \mathbb{R}^{M_1\times M_2\times \cdots\times M_N}$, $N=2^L$. The $2^L$ hierarchical tensor decomposition has the following form:
\begin{eqnarray}\label{HT}
\mathcal{A} & = & \sum^{r_{L-1}}_{\alpha=1}a^{L}_\alpha \phi^{L-1,1,\alpha}\circ \phi^{L-1,2,\alpha},\\
\phi^{l,t_l,\gamma_l} & = & \sum^{r_{l-1}}_{\alpha=1}a^{l,t_l,\gamma_l}_\alpha \phi^{l-1,2t_l-1,\alpha}\circ \phi^{l-1,2t_l,\alpha},
\quad t_l\in [2^{L-l}],\gamma_l\in[r_{l}], l=L-1,L-2,\cdots,2. \nonumber \\
\phi^{1,t_1,\gamma_1} & = & \sum^{r_0}_{\alpha=1}a^{1,t_1,\gamma_1}_\alpha \mathbf{u}^{0,2t_1-1,\alpha}  \circ\mathbf{u}^{0,2t_1,\alpha},
\quad t_1\in [2^{L-1}],\gamma_1\in[r_{1}].  \nonumber
\end{eqnarray}
where $\{\mathbf{u}^{0,t_1,\alpha}\}_{t_1\in [N],\alpha\in [r_0]}$ are the generated vectors of tensor $\mathcal{A}$. $r_l$ refer to level-$l$ rank. We denote $rank_{HT}(\mathcal{A})=(r_0,r_1,\cdots,r_{l-1})$. If all the ranks are equal to $r$,  $rank_{HT}(\mathcal{A})=r$ for simple.

\subsection{Convolutional Networks}
\begin{figure}
	\centering
	\includegraphics[width=13cm,height=4.5cm]{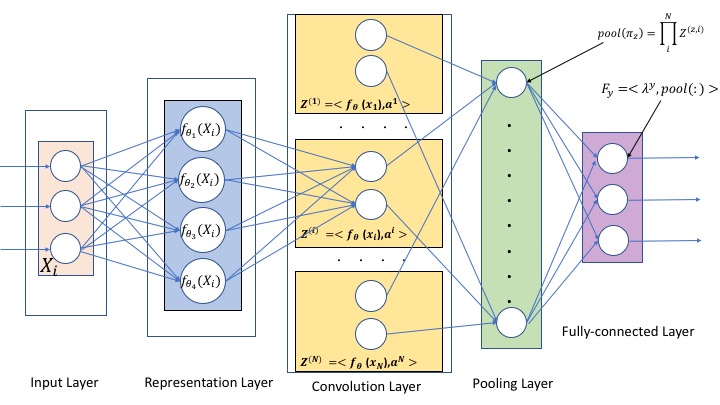}
	
	\caption{A CP network.}
	
\label{CP-network}	
\end{figure}
\begin{figure}
	\centering
    \includegraphics[width=13cm,height=4.5cm]{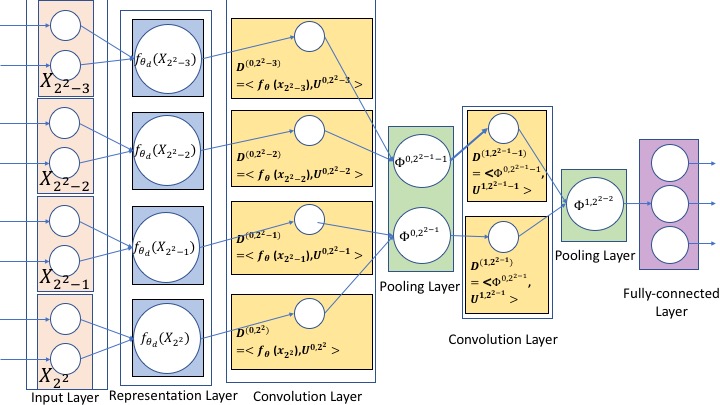}

	\caption{A example of HT network with $L=2$.}
	
	\label{HT-network}
\end{figure}
Given a dataset of pairs $\{(X^{(b)},y^{(b)})\}^D_{b=1}$, each object $X^{(b)}$ is represented as a set of vectors $X^{(b)}=(\mathbf{x}_1,\mathbf{x}_2,\cdots,\mathbf{x}_N)$  with ${\bf x}_i\in \mathbb{R}^s$.
By applying parameter dependent functions $\{f_{\theta_d}: \mathbb{R}^s\rightarrow \mathbb{R}\}^M_{d=1}$, we construct a representation map $f_\theta: \mathbb{R}^s\rightarrow \mathbb{R}^M$.  Object $X=(\mathbf{x}_1,\mathbf{x}_2,\cdots,\mathbf{x}_N)$ with $\mathbf{x}_i\in \mathbb{R}^s$ is classified into one of categories $\mathfrak{Y}=\{1,2,\cdots,Y\}$. Classification is carried out through the maximization of the following score function:
\begin{equation}\label{score}
F_y(\mathbf{x}_1,\mathbf{x}_2,\cdots,\mathbf{x}_N)=\sum^M_{d_1,d_2,\cdots,d_N} \mathcal{A}^y_{d_1,d_2,\cdots,d_N}\prod^N_{i=1}f_{\theta_{d_{i}}}(\mathbf{x}_i),
\end{equation}
where $\mathcal{A}^y\in \mathbb{R}^{M\times M\times \cdots\times M}$ is a trainable coefficient tensor.

The representation functions  $\{f_{\theta_d}: \mathbb{R}^s\rightarrow \mathbb{R}\}^M_{d=1}$  have many choices. For example, neurons-type functions  $f_{\theta_d}(\mathbf{x})= \sigma( \mathbf{x}^T\mathbf{w}_d+\mathbf{b}_d)$ for parameters  $\theta_d=(\mathbf{w}_d,\mathbf{b}_d)$ and  point-wise non-linear activation $\sigma(\cdot)$. We list some commonly used activation functions here, for example hard threshold: $\sigma(z)=1$ for $z>0$,  otherwise 0;  the rectified linear unit (ReLU) $\sigma(z)=\max\{z,0\}$; and sigmoid $\sigma(z)=\frac{1}{1+e^{-z}}$.


The main task is to estimate the parameters $\theta_1,\theta_2,\cdots,\theta_M$ and the coefficient tensors $\mathcal{A}^1, \cdots, \mathcal{A}^Y$. The computational challenge is that the coefficient tensor has an exponential number of entries. We can utilize tensor decompositions to address this issue.

If the coefficient tensor is in CP decomposition,  the network corresponding to CP decomposition is called shallow
network(or CP Network), see Figure \ref{CP-network}. We obtain its score function:
\begin{equation}\label{CPnet}
	F_y=\sum^Z_{z=1}\lambda^y_z \prod^N_{i=1}\sum^M_{d=1}\mathbf{a}^{z,i}_df_{\theta_d}(\mathbf{x^i}).
	\end{equation}
Note that the same vectors $\mathbf{a}^{z,i}$ are shared across all classes $y$. If set $Z=M^N$, the model is  universal, i.e., any tensors $\mathcal{A}^1,\cdots, \mathcal{A}_Y$ can be represented.

If the coefficient tensors are in  HT format like (\ref{HT}), the network refer to HT network. An example of HT network with $L=2$ is showed in Figure \ref{HT-network}. Cohen et al. \cite{cohen2016expressive} analyzed the expressive power of HT network and proved that  a shallow network with exponentially large width is required to emulate a HT network.

\section{Tucker Network}
In this section, we propose a Tucker network. If the coefficient tensors in (\ref{score}) are  in Tucker format (\ref{e1}), we refer it as Tucker network, i.e.,
\begin{equation}\label{tuck}
\mathcal{A}^y=\sum_{j_1,j_2,\cdots,j_N}g^y_{j_1,j_2,\cdots,j_N}(\mathbf{u}^{(1)}_{j_1}\circ \mathbf{u}^{(2)}_{j_2}\circ \cdots\circ \mathbf{u}^{(N)}_{j_N}).
\end{equation}
Suppose $J_k=\max_{y} rank(\mathbf{A}^y_{(k)})$ for same vectors $\mathbf{u}^{(k)}_{j_k}$($k\in [N]$) in (\ref{tuck}).
Here $\mathbf{A}^y_{(k)}$ be the $k$-th unfolding of tensor $\mathcal{A}^y$. If set $J_1=J_2=\cdots=J_N=J$, the number of parameter is: $YJ^N+MNJ$. If set $J_1=J_2=\cdots=J_N=M$, the model is universe, any tensor  can be represented by Tucker format,  number of $YM^N+M^2N$ parameters are needed.  Note that the score function for Tucker network:
\begin{eqnarray*}
	F_y
&=&
\sum^M_{d_1,\cdots, d_N}\left (\sum_{j_1,\cdot,j_N}g^y_{j_1,\cdots,j_N}u^{(1)}_{d_1,j_1}\cdots u^{(N)}_{d_N,j_N} \right )\prod^N_{i=1}f_{\theta_{d_i}}(\mathbf{x}_i)\\
	&=&
\sum_{j_1,\cdot,j_N}g^y_{j_1,\cdots,j_N}\prod^N_{i=1}\left (\sum^M_{d=1}u^{(i)}_{d,j_i}f_{\theta_{d}}(\mathbf{x}_i) \right).
\end{eqnarray*}

The Tucker network architecture is given in Figure \ref{Tuckernet}. The outputs from convolution layer are

$$
\mathbf{v}^{(i)}=U^{(i)T}f_{\theta}(\mathbf{x}_i),~~i\in [N],
$$

where $U^{(i)}=(u^{(i)}_{d,j_i})_{1\leq d\leq M; 1\leq j_i\leq J_i}$, $i\in [N]$.
The last output, i.e., score value is given as follows:

$$
F_y=\langle \mathcal{G}^y, \mathbf{v}^{(1)}\circ \mathbf{v}^{(2)}\cdots \circ \mathbf{v}^{(N)}\rangle,
$$
where $\langle \cdot \rangle$ is tensor scalar product, i.e., the sum of entry-wise product of two tensors.
Because $\mathcal{G}^y$ is a $N$ order tensor of smaller dimension $J_1\times \cdots \times J_N$, it can be further decomposed with a deeper network.  In this sense, Tucker network is also a kind of deep network.

\begin{figure}
	\centering
	\includegraphics[width=13cm,height=5cm]{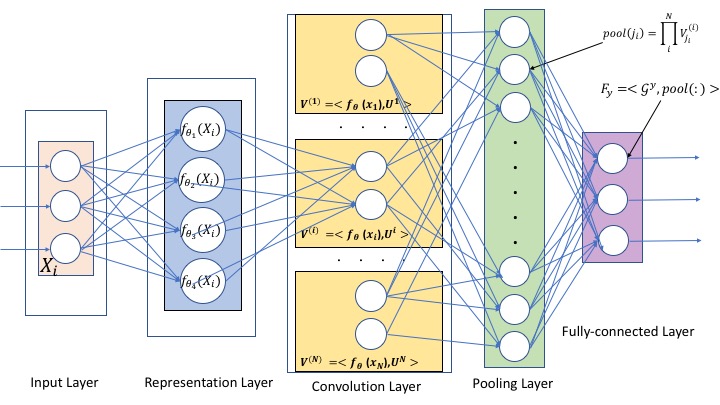}
	
	\caption{Tucker network}
	
	\label{Tuckernet}
\end{figure}
The following theorem demonstrates the expressive power of  Tucker network.

\begin{theorem}
	Let $\mathcal{A}^y$ be a tensor of  order $N$ and dimension $M$ in each mode, generated by Tucker form in (\ref{tuck}). Define $r=\max_{p,q}[\mathcal{G}]^{(p,q)}$ for all possible subsets $\mathbf{p},\mathbf{q}$, consider the space of all possible configurations for parameters. In the space, $\mathcal{A}^y$ will have CP rank of at least $r$ almost everywhere, i.e.,the Lebesgue measure of the space whose CP rank is less than $r$ is zero.
\end{theorem}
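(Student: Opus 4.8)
The plan is to bound the CP rank from below by a matricization rank and then show that this lower bound is generically attained. The starting point is the standard inequality $\rank_{CP}(\mathcal{A})\ge\max_{p,q}\rank\big([\mathcal{A}]^{(p,q)}\big)$: if $\mathcal{A}=\sum_{z=1}^{Z}\lambda_z\,\mathbf{a}^{z,1}\circ\cdots\circ\mathbf{a}^{z,N}$, then for every partition $(\mathbf{p},\mathbf{q})$,
\[
[\mathcal{A}]^{(p,q)}=\sum_{z=1}^{Z}\lambda_z\Big(\bigotimes_{k\in\mathbf{p}}\mathbf{a}^{z,k}\Big)\Big(\bigotimes_{k\in\mathbf{q}}\mathbf{a}^{z,k}\Big)^{T},
\]
a sum of $Z$ rank-one matrices, so each matricization has rank at most $Z$. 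Consequently it suffices to fix one partition $(\mathbf{p}^\ast,\mathbf{q}^\ast)$ attaining $r$ --- here I read $r$ as the generic value of $\max_{p,q}\rank([\mathcal{G}]^{(p,q)})$, namely $\max_{p,q}\min\!\big(\prod_{k\in\mathbf{p}}J_k,\prod_{k\in\mathbf{q}}J_k\big)$ --- and to prove that $\rank\big([\mathcal{A}^y]^{(p^\ast,q^\ast)}\big)\ge r$ holds off a Lebesgue-null set of parameter configurations.

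The bridge between $\mathcal{A}^y$ and its core is the matricization identity for mode products: writing $\mathcal{A}^y=\mathcal{G}^y\times_1 U^{(1)}\times_2\cdots\times_N U^{(N)}$ with $U^{(k)}=(u^{(k)}_{d,j})\in\mathbb{R}^{M\times J_k}$, one has, for any $(\mathbf{p},\mathbf{q})$,
\[
[\mathcal{A}^y]^{(p,q)}=\Big(\bigotimes_{k\in\mathbf{p}}U^{(k)}\Big)\,[\mathcal{G}^y]^{(p,q)}\,\Big(\bigotimes_{k\in\mathbf{q}}U^{(k)}\Big)^{T},
\]
which I would establish by comparing entries against the definitions of the $(p,q)$-matricization and of the Kronecker product. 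Since $J_k\le M$ in the Tucker network, for generic $U^{(k)}$ the factor $\bigotimes_{k\in\mathbf{p}^\ast}U^{(k)}$ has full column rank $\prod_{k\in\mathbf{p}^\ast}J_k$ (using $\rank(A\otimes B)=\rank(A)\rank(B)$), and likewise for $\mathbf{q}^\ast$; multiplying on both sides by full-column-rank matrices preserves rank, so generically $\rank\big([\mathcal{A}^y]^{(p^\ast,q^\ast)}\big)=\rank\big([\mathcal{G}^y]^{(p^\ast,q^\ast)}\big)$.

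To turn ``generically'' into a measure statement, pick a fixed $r\times r$ submatrix $S$ of $[\mathcal{A}^y]^{(p^\ast,q^\ast)}$ and set $D(U^{(1)},\dots,U^{(N)},\mathcal{G}^y)=\det S$; by the identity above $D$ is a polynomial in all the entries of the $U^{(k)}$ and of $\mathcal{G}^y$. If $D\not\equiv 0$, its zero set has Lebesgue measure zero, and since $\rank_{CP}(\mathcal{A}^y)<r$ forces $\rank\big([\mathcal{A}^y]^{(p^\ast,q^\ast)}\big)<r$, hence $D=0$, the set of configurations with $\rank_{CP}(\mathcal{A}^y)<r$ lies inside that null set --- which is exactly the theorem. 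To see $D\not\equiv 0$, take each $U^{(k)}$ with identity top block and zeros below and $\mathcal{G}^y$ a core whose $(p^\ast,q^\ast)$-matricization has rank exactly $r$; then $[\mathcal{A}^y]^{(p^\ast,q^\ast)}$ has a submatrix equal to $[\mathcal{G}^y]^{(p^\ast,q^\ast)}$, so some $r\times r$ minor --- and thus $D$ for a suitable choice of $S$ --- is nonzero.

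The genericity step is routine once the pieces are assembled; the real obstacle is the matricization/Kronecker identity of the second paragraph, where the ordering conventions for $[\cdot]^{(p,q)}$ and for $\otimes$ must be matched carefully, together with the bookkeeping that ensures the partition making $[\mathcal{G}^y]^{(p^\ast,q^\ast)}$ of rank $r$ is the same one used for $\mathcal{A}^y$ and that full column rank propagates correctly through a Kronecker product of possibly many factors. A secondary point, worth stating explicitly in the proof, is the reading of $r$ as the generic maximal matricization rank of a core tensor of the prescribed dimensions, so that ``$\rank_{CP}<r$'' is a genuinely nontrivial (measure-zero) event.
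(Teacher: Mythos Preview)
Your proposal is correct and follows essentially the same route as the paper: bound $\rank_{CP}$ below by a matricization rank (the paper's Lemma~1), use the Kronecker-product matricization identity for Tucker tensors (the paper's Lemma~2), and then argue via a nonvanishing $r\times r$ minor that the rank-drop locus has Lebesgue measure zero. The only cosmetic difference is that the paper first reduces, via its Lemma~3 (full column rank of the $U^{(k)}$), to a minor of $[\mathcal{G}]^{(p,q)}$ alone and treats the determinant as a polynomial in the core entries, whereas you keep the $U^{(k)}$ in the picture and treat the minor of $[\mathcal{A}^y]^{(p^\ast,q^\ast)}$ as a polynomial in all parameters; both packagings yield the same conclusion.
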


The proof can be found in the supplementary section.
We remark that
if $J_1=J_2=\cdots=J_N=J$, when $N$ is even, the Lebesgue measure of the Tucker format space whose CP rank is less than $J^{\frac{N}{2}}$ is zero;   when $N$ is odd, the Lebesgue measure of the Tucker format space whose CP rank is less than $J^{\frac{N-1}{2}}$ is also zero.

\subsection{Connection with HT Network}

In this subsection, to compare the expressive power of HT and Tucker network, we discuss the relationship between Tucker format and  hierarchical Tucker tensor format firstly. Here we only consider $N=2^L$ hierarchical tensor format, its corresponding HT network (\ref{HT}) has been well discussed in \cite{cohen2016expressive}.

We start it from $L=2$ hierarchical Tucker tensor, its HT network architecture is shown in Figure \ref{HT-network} . Given a $2^2$ order tensor, its hierarchical tensor format can always be written as
\begin{eqnarray*}
   \mathcal{A}   &=  & \sum^{r_{1}}_{\alpha=1}a^{2}_\alpha \phi^{1,1,\alpha}\circ \phi^{1,2,\alpha}, \\
   \phi^{1,1, \gamma_1} &= & \sum^{r_0}_{\alpha=1}a^{1,1,\gamma_1}\mathbf{u}^{0,1,\alpha}\circ \mathbf{u}^{0,2,\alpha},\\
    \phi^{1,2, \gamma_1} &=& \sum^{r_0}_{\alpha=1}a^{1,2,\gamma_1}\mathbf{u}^{0,3,\alpha}\circ \mathbf{u}^{0,4,\alpha},\quad \gamma_1 \in [r_1].
\end{eqnarray*}
$\mathbf{u}^{0,i,\alpha}, i\in[4]$ are vectors size of $M$. Here we suppose that $M\geq r_0$. Denote $\mathbf{A}=[\mathcal{A}]^{(12,34)}$,
we have $\mathbf{A}=\Phi^{1,1}\Sigma_1(\Phi^{1,2})^T$,
where $$\Phi^{1,1}= \left(
                     \begin{array}{ccc}
                       vec(\phi^{1,1,1}) & \cdots &  vec(\phi^{1,1,r_1}) \\
                     \end{array}
                   \right),$$  $$\Phi^{1,2}= \left(
                     \begin{array}{ccc}
                       vec(\phi^{1,2,1}) & \cdots &  vec(\phi^{1,2,r_1})\\
                     \end{array}
                   \right),$$ $$\Sigma_1=diag(a^2_1,\cdots,a^2_{r_1}).$$
$vec(\cdot)$ is a linear transformation that converts a matrix into a column vector. $diag(\cdot)$ is diagonal operator that transform a vector into a diagonal matrix.  Similarly, we have,
\begin{equation*}
  \phi^{1,1,\gamma_1}=\mathbf{U}^{0,1}\Sigma ^{0,1,\gamma_1}(\mathbf{U}^{0,2})^T,\quad \phi^{1,2,\gamma_1}=\mathbf{U}^{0,3}\Sigma ^{0,2,\gamma_1}(\mathbf{U}^{0,4})^T,
\end{equation*}
where $\mathbf{U}^{0,i}=\left(
                     \begin{array}{ccc}
                       \mathbf{u}^{0,i,1} & \cdots &   \mathbf{u}^{0,i,r_0} \\
                     \end{array}
                   \right)$, $i=[4]$, and
 $\Sigma^{0,j,\gamma_1}=diag(a^{1,j,\gamma_1}_1,\cdots,a^{1,j,\gamma_1}_{r_0})$, $j=[2], \gamma_1\in [r_1]$.

From the property of Kronecker product: $\mathbf{Y}=\mathbf{A}\mathbf{X}\mathbf{B} \Leftrightarrow vec(\mathbf{Y})=(\mathbf{B}^T\otimes \mathbf{A})vec(\mathbf{X})$, we deduce that,
\begin{equation*}
  vec(\phi^{1,1,\gamma_1})=(\mathbf{U}^{0,2}\otimes \mathbf{U}^{0,1})vec(\Sigma ^{0,1,\gamma_1}),\quad vec(\phi^{1,2,\gamma_1})=(\mathbf{U}^{0,4}\otimes \mathbf{U}^{0,3})vec(\Sigma ^{0,2,\gamma_1}),\quad \gamma_1\in [r_1].
\end{equation*}
Therefore,
\begin{equation*}
\Phi^{1,1}=(\mathbf{U}^{0,2}\otimes \mathbf{U}^{0,1})\mathbf{D}_1,\quad \Phi^{1,2}=(\mathbf{U}^{0,4}\otimes \mathbf{U}^{0,3})\mathbf{D}_2,
\end{equation*}
with $\mathbf{D}_1=\left(
                     \begin{array}{ccc}
                       vec(\Sigma^ {0,1,1} )& \cdots& vec(\Sigma^{0, 1, r_1}) \\
                     \end{array}
                   \right)
$, $\mathbf{D}_2=\left(
                     \begin{array}{ccc}
                       vec(\Sigma ^{0,2, 1})& \cdots& vec(\Sigma^{0,2, r_1}) \\
                     \end{array}
                   \right).
$
We can get that
\begin{equation}\label{matlayer2}
   \mathbf{A}=(\mathbf{U}^{0,2}\otimes \mathbf{U}^{0,1})\mathbf{D}_1\Sigma_1\mathbf{D}^T_2(\mathbf{U}^{0,4}\otimes \mathbf{U}^{0,3})^T.
\end{equation}
Therefore,
\begin{equation*}
\mathcal{A}=\mathcal{G}\times_1 \mathbf{U}^{0,2}\times_2 \mathbf{U}^{0,1}\times_3 \mathbf{U}^{0,3}\times_4 \mathbf{U}^{0,4},
\end{equation*}
with $\mathcal{G}^{(12,34)}=D_1\Sigma_1 D^T_2$.  It implies that a $2^2$ hierarchical tensor format can be written as a $2^2$ order Tucker tensor.
Worth to say, from (\ref{matlayer2}), the rank of $\mathbf{A}$ is less than that of its factor matrices. Because of the structure of $\mathbf{D}_1$, $\mathbf{D}_2$, we get that $rank(\mathbf{D}_1)\leq \min\{r_0,r_1\}$ and also $rank(\mathbf{D}_2)\leq \min\{r_0,r_1\}$. From the rank property, $rank(\mathbf{A})\leq \min \{r_0,r_1\}$.

When the hierarchical tensor has $L$ layers, we can similarly deduced the following results.

\begin{theorem}\label{ThmofTucker-HT}
	Any  $2^L$ hierarchical tensor can be represented as a  $2^L$ order Tucker tensor and vice versa.
\end{theorem}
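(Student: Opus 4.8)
The plan is to derive both implications from one observation, of which the $L=2$ computation above is the prototype: if we carry out an HT decomposition of $\mathcal{A}$ over the balanced binary tree on $[N]$ as in (\ref{HT}) and collect, for each mode $i\in[N]$, the leaf vectors $\mathbf{u}^{0,i,\cdot}$ into a matrix $\mathbf{U}^{(i)}:=\big(\mathbf{u}^{0,i,1}\,\big|\,\cdots\,\big|\,\mathbf{u}^{0,i,r_0}\big)\in\mathbb{R}^{M\times r_0}$, then $\mathcal{A}=\mathcal{G}\times_1\mathbf{U}^{(1)}\times_2\cdots\times_N\mathbf{U}^{(N)}$, where the core $\mathcal{G}\in\mathbb{R}^{r_0\times\cdots\times r_0}$ is the tensor obtained by re-running the recursion (\ref{HT}) verbatim with each leaf vector $\mathbf{u}^{0,i,\alpha}$ replaced by the canonical basis vector $\mathbf{e}_\alpha\in\mathbb{R}^{r_0}$, and $\mathcal{G}$ is itself an HT tensor over the same tree. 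Once this is in hand, HT $\Rightarrow$ Tucker is immediate. For the converse I would start from $\mathcal{A}=\mathcal{G}\times_1\mathbf{U}^{(1)}\times_2\cdots\times_N\mathbf{U}^{(N)}$ with core $\mathcal{G}\in\mathbb{R}^{J_1\times\cdots\times J_N}$; since every tensor admits an HT decomposition over the balanced binary tree (e.g.\ by successive SVDs of the matricizations $[\mathcal{G}]^{(p,q)}$ along the tree), pick one for $\mathcal{G}$, with leaf vectors $\mathbf{w}^{0,i,\alpha}\in\mathbb{R}^{J_i}$ and transfer coefficients $\{a^{l,t_l,\gamma_l}_\alpha\}$, keep the coefficients and replace each leaf vector by $\mathbf{U}^{(i)}\mathbf{w}^{0,i,\alpha}\in\mathbb{R}^{M}$; by the same mechanism this is an HT decomposition of $\mathcal{A}$.

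So everything reduces to the displayed observation, whose engine is that mode-$k$ multiplication is multilinear and commutes with the outer products and contractions that assemble an HT tensor: $(\mathcal{B}\times_k\mathbf{V})\circ\mathcal{C}=(\mathcal{B}\circ\mathcal{C})\times_k\mathbf{V}$ whenever mode $k$ belongs to $\mathcal{B}$ (and symmetrically for $\mathcal{C}$), and mode products distribute over sums $\sum_\alpha a_\alpha(\cdot)$. I would introduce the intermediate tensors $\psi^{l,t_l,\gamma_l}$ produced by the $\mathbf{e}_\alpha$-substituted recursion and prove by induction on $l$ that
\begin{equation*}
\phi^{l,t_l,\gamma_l}=\psi^{l,t_l,\gamma_l}\ \text{with each}\ \mathbf{U}^{(i)}\ \text{applied in its mode}\ i,\ \text{for all}\ i\ \text{with}\ \lceil i/2^{l}\rceil=t_l .
\end{equation*}
The base case $l=1$ is exactly the leaf-level factorization $\phi^{1,t_1,\gamma_1}=\mathbf{U}^{(2t_1-1)}\,\psi^{1,t_1,\gamma_1}\,\big(\mathbf{U}^{(2t_1)}\big)^{T}$ with the diagonal matrix $\psi^{1,t_1,\gamma_1}=\diag\!\big(a^{1,t_1,\gamma_1}_{1},\dots,a^{1,t_1,\gamma_1}_{r_0}\big)$ used in the $L=2$ case. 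For the inductive step I would apply the commutation rules to $\phi^{l,t_l,\gamma_l}=\sum_\alpha a^{l,t_l,\gamma_l}_\alpha\,\phi^{l-1,2t_l-1,\alpha}\circ\phi^{l-1,2t_l,\alpha}$, using that the two children of node $(l,t_l)$ cover the disjoint complementary leaf-blocks $\{i:\lceil i/2^{l-1}\rceil=2t_l-1\}$ and $\{i:\lceil i/2^{l-1}\rceil=2t_l\}$, whose union is $\{i:\lceil i/2^{l}\rceil=t_l\}$. A final application of the same rules at the root, $\mathcal{A}=\sum_\alpha a^L_\alpha\,\phi^{L-1,1,\alpha}\circ\phi^{L-1,2,\alpha}$, then yields $\mathcal{A}=\mathcal{G}\times_1\mathbf{U}^{(1)}\times_2\cdots\times_N\mathbf{U}^{(N)}$ with $\mathcal{G}=\sum_\alpha a^L_\alpha\,\psi^{L-1,1,\alpha}\circ\psi^{L-1,2,\alpha}$, which is visibly an HT decomposition of $\mathcal{G}$ over the same tree, closing the argument.

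The hard part is the index bookkeeping in this inductive step — tracking the leaf-block of each subtree and checking that distinct factor matrices never land on the same mode, so that the commutation identity for $\circ$ and $\times_k$ can be applied one mode at a time — which is routine but notation-heavy; everything else is the two substitutions above. Two points I would record along the way: the standing hypothesis $M\ge r_0$ (and the analogous dimension condition when the core is re-decomposed in the converse direction) is harmless, since a factor matrix may always be taken of full column rank $\le M$; and the construction is rank-faithful, in that HT ranks $(r_0,r_1,\dots,r_{L-1})$ yield Tucker ranks $J_i\le r_0$ (the $L=2$ case above even gives the sharper $\rank\big([\mathcal{A}]^{(12,34)}\big)\le\min\{r_0,r_1\}$), while a Tucker core with HT ranks $(\rho_0,\dots,\rho_{L-1})$ yields HT ranks $r_0\le\max_i J_i$ and $r_l\le\rho_l$ for $l\ge1$ — which is what makes the expressive-power comparison in the next subsection meaningful.
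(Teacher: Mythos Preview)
Your HT $\Rightarrow$ Tucker direction is the paper's own argument in coordinate-free dress. The paper runs the same induction on the level $l$, but phrased via matricization and the identity $\mathrm{vec}(\mathbf{AXB})=(\mathbf{B}^T\otimes\mathbf{A})\,\mathrm{vec}(\mathbf{X})$: it shows $\Phi^{l,t}=\big(\mathbf{U}^{0,2^{l}t}\otimes\cdots\otimes\mathbf{U}^{0,2^{l}(t-1)+1}\big)\,\big[\mathrm{vec}(\tilde{\mathbf{P}}^{l,t,1}),\ldots,\mathrm{vec}(\tilde{\mathbf{P}}^{l,t,r_l})\big]$, which is precisely your statement $\phi^{l,t_l,\gamma_l}=\psi^{l,t_l,\gamma_l}$ with the relevant $\mathbf{U}^{(i)}$ applied in their modes, after unfolding. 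Your commutation rule $(\mathcal{B}\times_k\mathbf{V})\circ\mathcal{C}=(\mathcal{B}\circ\mathcal{C})\times_k\mathbf{V}$ is exactly what the paper is invoking implicitly each time it passes from $[\Phi^{l+1,\cdot,\cdot}]_p=\Phi^{l,\cdot}\mathbf{D}\,(\Phi^{l,\cdot})^T$ to the Kronecker form; your version just avoids the vec/Kronecker bookkeeping and the attendant mode-reversal in the paper's final display.

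One substantive difference: the paper's written proof establishes only HT $\Rightarrow$ Tucker and is silent on the ``vice versa'' in the statement. Your converse---take any HT decomposition of the core $\mathcal{G}$ (which exists for every tensor) and push each $\mathbf{U}^{(i)}$ down to the corresponding leaf vectors via $\mathbf{w}^{0,i,\alpha}\mapsto\mathbf{U}^{(i)}\mathbf{w}^{0,i,\alpha}$---is correct and supplies that missing direction; it is the same multilinearity mechanism run in reverse. Your rank remarks ($J_i\le r_0$ from HT, and the HT ranks of the reconstructed tensor controlled by those of the core plus $\max_i J_i$ at the leaves) are also accurate and anticipate Theorem~\ref{Thmrank}.
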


\begin{theorem}\label{Thmrank}
	For any tensor $\mathcal{A}$, if $rank_{HT}\leq r$, then $rank_{Tucker}\leq r$.
\end{theorem}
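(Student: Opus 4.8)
The plan is to recognize that the Tucker rank of $\mathcal{A}$ is governed by the ranks of its single-mode unfoldings, and that in the hierarchical format each of those is bounded by the leaf rank $r_0$ of the dimension tree. First I would recall the standard identification $rank_{Tucker}(\mathcal{A}) = (J_1,\dots,J_N)$ with $J_k = rank(A_{(k)})$ — the very fact already invoked in Section~3 when the paper sets $J_k = \max_y rank(\mathbf{A}^y_{(k)})$. It therefore suffices to produce, for each mode $k$, some Tucker-type factorization of $\mathcal{A}$ whose $k$-th factor matrix has at most $r$ columns, since then $J_k\le r$.

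To obtain such factorizations uniformly in $k$, I would use the construction underlying Theorem~\ref{ThmofTucker-HT}. Unrolling the hierarchical recursion (\ref{HT}) from the leaf level upward — by repeatedly applying the Kronecker identity $vec(\mathbf{A}\mathbf{X}\mathbf{B}) = (\mathbf{B}^T\otimes\mathbf{A})vec(\mathbf{X})$ to migrate all leaf factor matrices to the outermost positions, exactly as done for $L=2$ just before (\ref{matlayer2}) — expresses $\mathcal{A}$ as
\[
\mathcal{A} = \mathcal{G}\times_1\mathbf{U}^{0,\sigma(1)}\times_2\cdots\times_N\mathbf{U}^{0,\sigma(N)},
\]
where $\sigma$ is a permutation of $[N]$, every leaf factor $\mathbf{U}^{0,i}=(\mathbf{u}^{0,i,1}\ \cdots\ \mathbf{u}^{0,i,r_0})\in\mathbb{R}^{M\times r_0}$ carries exactly $r_0$ columns, and the core $\mathcal{G}$ lies in $\mathbb{R}^{r_0\times\cdots\times r_0}$. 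Matricizing this representation in mode $k$ gives $A_{(k)} = \mathbf{U}^{0,\sigma(k)}\,G_{(k)}\,\big(\bigotimes_{i\ne k}\mathbf{U}^{0,\sigma(i)}\big)^{T}$ for the appropriate ordering of the Kronecker factors, and since the rank of a product is at most the rank of its first factor we get $rank(A_{(k)})\le rank(\mathbf{U}^{0,\sigma(k)})\le r_0$.

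Combining these observations with the hypothesis closes the proof: since $rank_{HT}(\mathcal{A})=(r_0,r_1,\dots,r_{L-1})$ and $rank_{HT}(\mathcal{A})\le r$ means $\max_{0\le l\le L-1}r_l\le r$, in particular $r_0\le r$; hence $J_k=rank(A_{(k)})\le r_0\le r$ for every $k\in[N]$, which is exactly $rank_{Tucker}(\mathcal{A})\le r$.

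The only step that is not routine linear-algebra bookkeeping is the general-$L$ unrolling producing the displayed Tucker form with leaf factors of uniform column count $r_0$; the $L=2$ case is written out in the excerpt, and its inductive continuation — contract one further level of transfer tensors and reapply the Kronecker identity — is precisely what Theorem~\ref{ThmofTucker-HT} already supplies, so in the write-up it can be cited rather than rederived. If one prefers to bypass the explicit Tucker form, the same bound follows from a direct induction over the levels of the tree showing that the column space of the mode-$k$ matricization of $\mathcal{A}$ is contained in $\mathrm{span}\{\mathbf{u}^{0,k,\alpha}:\alpha\in[r_0]\}$; either route reduces the statement to the single remark that every leaf of the dimension tree carries rank $r_0\le r$.
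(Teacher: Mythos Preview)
Your proposal is correct and follows essentially the same route as the paper: invoke the Tucker factorization produced in the proof of Theorem~\ref{ThmofTucker-HT}, observe that each leaf factor matrix $\mathbf{U}^{0,i}\in\mathbb{R}^{M\times r_0}$ has $r_0$ columns so that $rank_{Tucker}(\mathcal{A})\le r_0$, and then use $r_0\le r$ from the hypothesis $rank_{HT}(\mathcal{A})\le r$. The paper's own proof is just this two-line remark; your write-up spells out the mode-$k$ matricization step more explicitly, but the argument is the same.
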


The detailed proofs of Theorem \ref{ThmofTucker-HT} and Theorem \ref{Thmrank} can be found in Appendix.

According to Theorem \ref{Thmrank}, given a hierarchical Tucker network of width $r$, we know that the width of Tucker network is not possible  larger than $r$.

\section{Experimental Results}

We designed experiments to compare the performance of three networks: Tucker network, HT network and shallow network. The results illustrate the usefulness of Tucker network.
We implement shallow network, Tucker network and HT network with \texttt{TensorFlow}\cite{abadi2016tensorflow} back-end, and test three networks on two different data sets: Mnist \cite{lecun1990handwritten} and CIFAR-10 \cite{krizhevsky2009learning}. All three networks are trained by using the back-propagation algorithm. In all three networks, we choose ReLU as the activation function in the \textit{representation layer} and apply batch normalization \cite{ioffe2015batch} between \textit{convolution layer} and \textit{pooling layer} to eliminate numerical overflow and underflow.


We choose Neurons-type $f_{\theta}(\mathbf{x})= \sigma( \mathbf{x}^T\mathbf{w}+\mathbf{b})$ with ReLU nonlinear activation $\sigma$ as representation map $\{f_{\theta_d}$: $\mathbb{R}^s\rightarrow \mathbb{R}\}_{d=1}^M$. Actually the representation mapping now is acted as a convolution layer in general CNNs. Each image patch is transformed through a representation function with parameter sharing across all the image patches.
\textit{Convolution layer} $\mathbf{v}^{(i)}=U^{(i)T}f_{\theta}(\mathbf{x}_i),i\in [N]$ in Figure \ref{Tuckernet} actually can been seen as a locally connected layer in CNN. It is a specific convolution layer without parameter sharing, which means that the parameters of filter would differ when sliding across different spatial positions. In the hidden layer, without overlapping, a 3D convolution operator size of $1\times1\times j_i$  is applied.
Following is a product \textit{pooling layer} to realize the outer product computation  $\prod^N_{i=1}\mathbf{v}^{(i)}_{j_i}$. It can be explained as a pooling layer with local connectivity property, which only connects a neuron with partial neurons in the previous layer. The output of neuron is the multiplication of entries in the neurons connected to it. The \textit{fully-connected layer} simply apply the linear mapping on the output of pooling layer. The output of Tucker network would be a vector $Y$ corresponding to class scores.

\subsection{Mnist}

The MNIST database of handwritten digits has a training set of 60000 examples, and a test set of 10000 examples with 10 categories from 0 to 9. Each image is of $28\times 28$ pixels.
In the experiment, we select the gradient descent optimizer for back-propagation with batch size 200, and use a exponential decay learning rate with 0.2 initial learning rate, 6000 decay step and 0.1 decay rate.
Figure \ref{Mnist_acc} shows the training and test accuracy of three networks with 3834 number of parameters that have been learned. The parameters contains four parameters in batch normalization (mean, std, alpha, beta). We list filter size, strides size and rank as well in Table \ref{mnist_sensitive_par}. It is obvious that Tucker network outperforms shallow network and HT network. Moreover, we test the sensitivity of Tucker network with the change of rank, and compare the performance with the other two networks with the same number of parameters. Figure \ref{mnist_sensitive}  illustrates the sensitivity performance, each value records the highest accuracy in training or test data. Tucker network can achieve the highest accuracy at most times.

\subsection{CIFAR-10}
CIFAR-10 data \cite{krizhevsky2009learning} is a more complicated data set consisting of 60000 color images size of $32\times 32$ with 10 classes. Here, we use the gradient descent optimizer with 0.05 learning rate and 200 batch size to train.
 In Figure \ref{cifar10_acc} we report the training and test accuracy with 23790 trained parameters. Table \ref{cifar10_sensitivity_par} shows the parameter details of sensitivity test, whose results are displayed in Figure \ref{cifar10_sensitive} . From Figure \ref{cifar10_acc} and Figure \ref{cifar10_sensitive} , Tucker network still has more excellent performance when fitting a more complicated data set.

\section{Conclusion}

In this paper, we presented a Tucker network and prove the expressive power theorem. We stated that a shallow network of exponentially large width is required to mimic Tucker network. A connection between Tucker network and HT network is discussed. The experiments on Mnist and CIFAR-10 data show the usefulness of our proposed Tucker network.

\begin{figure}
	\begin{minipage}[t]{0.5\linewidth}
		\centering
		\includegraphics[height=1.8in,width=2.8in]{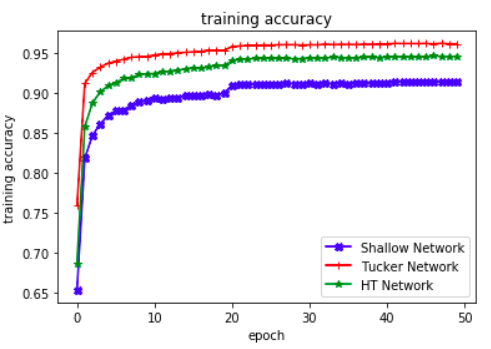}
	\end{minipage}
	\begin{minipage}[t]{0.5\linewidth}
		\centering
		\includegraphics[height=1.8in,width=2.8in]{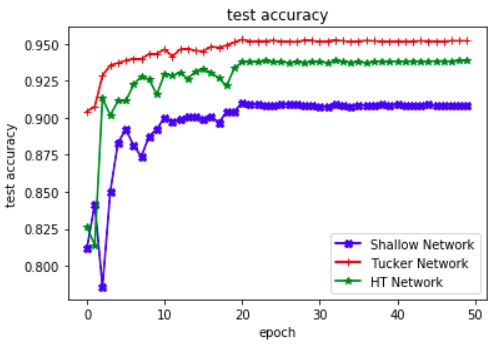}
	\end{minipage}

	\caption{Training(left) and testing(right) accuracy of Tucker network for Mnist data.}
	
	\label{Mnist_acc}
\end{figure}

\begin{figure}
	\begin{minipage}[t]{0.5\linewidth}
		\centering
		\includegraphics[height=1.8in,width=2.6in]{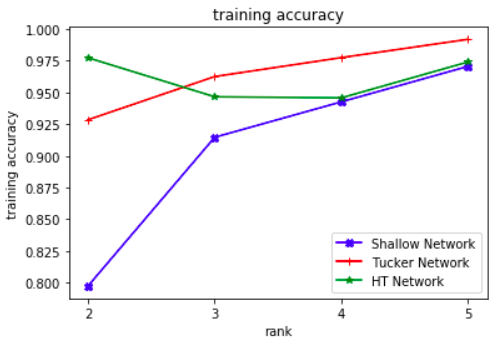}
	\end{minipage}
	\begin{minipage}[t]{0.5\linewidth}
		\centering
		\includegraphics[height=1.8in,width=2.6in]{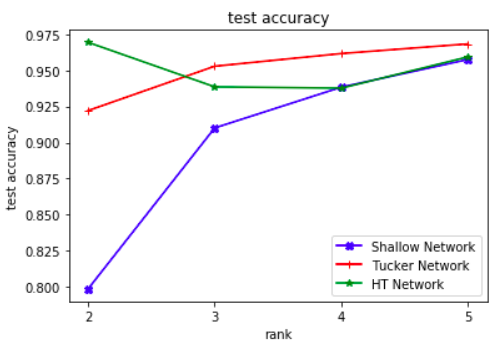}
	\end{minipage}

	\caption{The performance of Tucker network, Shallow network and Hierarchical tensor network with the change of rank in Mnist data: training accuracy(left); testing accuracy(right).}
	
	\label{mnist_sensitive}
\end{figure}

\begin{table}
{
	\centering
	
	\caption{The parameters setting of Fig. \ref{mnist_sensitive} in Tucker network, Shallow network and Hierarchical tensor network.}
	
	\label{mnist_sensitive_par}
	\begin{tabular}{|c|c|c|c|c|c|}
		\hline
		Network & \tabincell{c}{Num of\\parameters} & \tabincell{c}{Num $M$ of representation \\function} & Filter size & Strides size& \tabincell{c}{Rank \\$J_i(i\in [N])$} \\
		\hline
		Tucker  &  & 10& 14 $\times$ 23 & 14 $\times$ 5& {\color{red}2} \\
		HT  & 3478& 14 & 14 $\times$14 & 14 $\times$ 14& 8\\
		Shallow  & & 10 & 16 $\times$21& 12 $\times$ 7&2\\
		\hline
		Tucker  &  & 12& 14 $\times$ 17 & 14 $\times$ 11& {\color{red}3} \\
		HT  & 3834& 18 & 14 $\times$14 & 14 $\times$ 14& 3\\
		Shallow  & & 16 & 14 $\times$16& 14 $\times$ 12&3\\
		\hline
		Tucker  &  & 12& 14 $\times$ 15 & 14 $\times$ 13& {\color{red}4} \\
		HT  & 5300& 12 & 16 $\times$26 & 12 $\times$ 2& 4\\
		Shallow  & & 10 & 20 $\times$21& 8 $\times$ 7&4\\
		\hline
		Tucker  &  & 11& 14 $\times$ 14 &  14$\times$ 14& {\color{red}5} \\
		HT  & 8657& 11 & 26 $\times$27 & 2 $\times$ 1& 11\\
		Shallow  & & 17 & 20 $\times$23& 8 $\times$ 5&10\\
		\hline
	\end{tabular}
}
\end{table}

\begin{figure}
	\begin{minipage}[t]{0.5\linewidth}
		\centering
		\includegraphics[height=1.8in,width=2.8in]{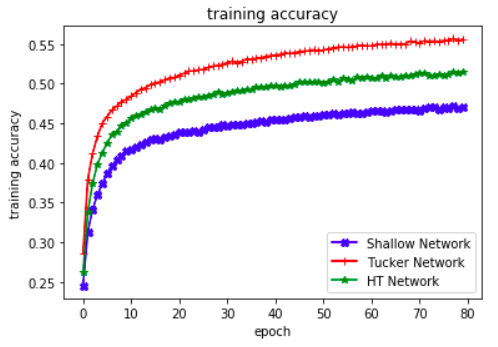}
	\end{minipage}
	\begin{minipage}[t]{0.5\linewidth}
		\centering
		\includegraphics[height=1.8in,width=2.8in]{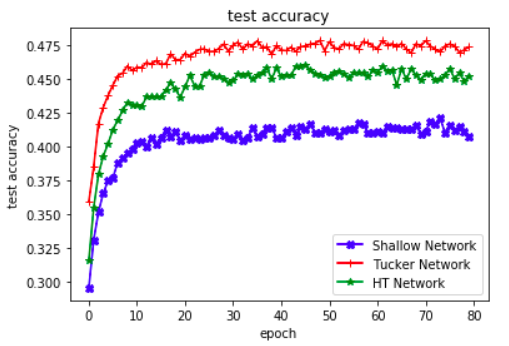}
	\end{minipage}

	\caption{Training(left) and testing(right) accuracy of CIFAR-10 data.}
	
	\label{cifar10_acc}
\end{figure}

\begin{figure}
	\begin{minipage}[t]{0.5\linewidth}
		\centering
		\includegraphics[height=1.8in,width=2.6in]{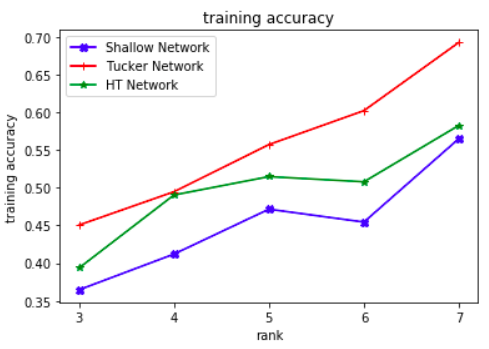}
	\end{minipage}
	\begin{minipage}[t]{0.5\linewidth}
		\centering
		\includegraphics[height=1.8in,width=2.6in]{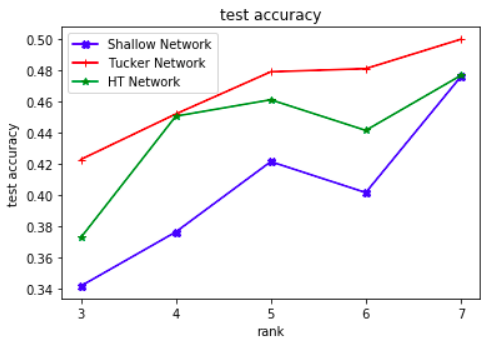}
	\end{minipage}

	\caption{The performance of Tucker network, Shallow network and HT network with the change of rank in CIFAR-10 data: training accuracy(left); testing accuracy(right).}
	
	\label{cifar10_sensitive}
\end{figure}

\begin{table}
{
	\centering
	
		\caption{The parameters setting of Fig. \ref{cifar10_sensitive} in Tucker network, Shallow network and Hierarchical tensor network.}
		
			\label{cifar10_sensitivity_par}
	\begin{tabular}{|c|c|c|c|c|c|}
		\hline
		Network & \tabincell{c}{Num of \\parameters} & \tabincell{c}{Num $M$ of \\representation ~function}& Filter size & Strides size&\tabincell{c}{Rank \\$J_i(i\in[N])$ } \\
		\hline
		Tucker  &  & 10& 16 $\times$ 26 & 16 $\times$ 6& {\color{red}3} \\
		HT  & 13432&  10&  21$\times$21 & 11 $\times$11&3\\
		Shallow  & & 10 &17  $\times$26&15 $\times$6&3\\
		\hline
		Tucker  &  & 10& 16 $\times$ 31&16 $\times$1 & {\color{red}4} \\
		HT  & 17626&  22&16  $\times$16&16$\times$16 &6 \\
		Shallow  & & 10 &20  $\times$29&12$\times$3&4\\
		\hline
		Tucker  &  & 20& 16 $\times$ 18 &16$\times$14&{\color{red}5} \\
		HT  & 23970& 30 & 16 $\times$16 &16$\times$16& 6\\
		Shallow  & & 12 & 25 $\times$26&7$\times$6&19\\
		\hline
		Tucker  &  & 24& 16 $\times$16  & 16$\times$16&{\color{red}6} \\
		HT  & 32016&  12&28  $\times$31 & 4$\times$1&9\\
		Shallow  & &  20& 17 $\times$31&15$\times$1&4\\
		\hline
		Tucker  &  & 31& 16 $\times$ 17 &16$\times$15 &{\color{red}7} \\
		HT  & 50233& 43 & 18 $\times$ 21& 14$\times$11&7\\
		Shallow  & & 37 & 17 $\times$26&15$\times$6&7\\
		\hline
	\end{tabular}
}
\end{table}

\bibliographystyle{abbrv}
\bibliography{Tuckernet_preprint}

\begin{thebibliography}{10}

\bibitem{abadi2016tensorflow}
M.~Abadi, P.~Barham, J.~Chen, Z.~Chen, A.~Davis, J.~Dean, M.~Devin,
  S.~Ghemawat, G.~Irving, M.~Isard, et~al.
\newblock Tensorflow: A system for large-scale machine learning.
\newblock In {\em 12th $\{$USENIX$\}$ Symposium on Operating Systems Design and
  Implementation ($\{$OSDI$\}$ 16)}, pages 265--283, 2016.

\bibitem{caron2005zero}
R.~Caron and T.~Traynor.
\newblock The zero set of a polynomial.
\newblock 2005.

\bibitem{carroll1970analysis}
J.~D. Carroll and J.-J. Chang.
\newblock Analysis of individual differences in multidimensional scaling via an
  n-way generalization of “eckart-young” decomposition.
\newblock {\em Psychometrika}, 35(3):283--319, 1970.

\bibitem{cohen2016expressive}
N.~Cohen, O.~Sharir, and A.~Shashua.
\newblock On the expressive power of deep learning: A tensor analysis.
\newblock In {\em Conference on Learning Theory}, pages 698--728, 2016.

\bibitem{cohen2016convolutional}
N.~Cohen and A.~Shashua.
\newblock Convolutional rectifier networks as generalized tensor
  decompositions.
\newblock In {\em International Conference on Machine Learning}, pages
  955--963, 2016.

\bibitem{de2000multilinear}
L.~De~Lathauwer, B.~De~Moor, and J.~Vandewalle.
\newblock A multilinear singular value decomposition.
\newblock {\em SIAM journal on Matrix Analysis and Applications},
  21(4):1253--1278, 2000.

\bibitem{delalleau2011shallow}
O.~Delalleau and Y.~Bengio.
\newblock Shallow vs. deep sum-product networks.
\newblock In {\em Advances in Neural Information Processing Systems}, pages
  666--674, 2011.

\bibitem{eldan2016power}
R.~Eldan and O.~Shamir.
\newblock The power of depth for feedforward neural networks.
\newblock In {\em Conference on learning theory}, pages 907--940, 2016.

\bibitem{gers1999learning}
F.~A. Gers, J.~Schmidhuber, and F.~Cummins.
\newblock Learning to forget: Continual prediction with lstm.
\newblock 1999.

\bibitem{grasedyck2010hierarchical}
L.~Grasedyck.
\newblock Hierarchical singular value decomposition of tensors.
\newblock {\em SIAM Journal on Matrix Analysis and Applications},
  31(4):2029--2054, 2010.

\bibitem{grasedyck2011introduction}
L.~Grasedyck and W.~Hackbusch.
\newblock An introduction to hierarchical (h-) rank and tt-rank of tensors with
  examples.
\newblock {\em Computational Methods in Applied Mathematics Comput. Methods
  Appl. Math.}, 11(3):291--304, 2011.

\bibitem{graves2013speech}
A.~Graves, A.-r. Mohamed, and G.~Hinton.
\newblock Speech recognition with deep recurrent neural networks.
\newblock In {\em 2013 IEEE international conference on acoustics, speech and
  signal processing}, pages 6645--6649. IEEE, 2013.

\bibitem{hackbusch2012tensor}
W.~Hackbusch.
\newblock {\em Tensor spaces and numerical tensor calculus}, volume~42.
\newblock Springer Science \& Business Media, 2012.

\bibitem{harshman1970foundations}
R.~A. Harshman et~al.
\newblock Foundations of the parafac procedure: Models and conditions for an"
  explanatory" multimodal factor analysis.
\newblock 1970.

\bibitem{hastad1986almost}
J.~Hastad.
\newblock Almost optimal lower bounds for small depth circuits.
\newblock In {\em Proceedings of the eighteenth annual ACM symposium on Theory
  of computing}, pages 6--20. Citeseer, 1986.

\bibitem{haastad1991power}
J.~H{\aa}stad and M.~Goldmann.
\newblock On the power of small-depth threshold circuits.
\newblock {\em Computational Complexity}, 1(2):113--129, 1991.

\bibitem{he2016deep}
K.~He, X.~Zhang, S.~Ren, and J.~Sun.
\newblock Deep residual learning for image recognition.
\newblock In {\em Proceedings of the IEEE conference on computer vision and
  pattern recognition}, pages 770--778, 2016.

\bibitem{ioffe2015batch}
S.~Ioffe and C.~Szegedy.
\newblock Batch normalization: Accelerating deep network training by reducing
  internal covariate shift.
\newblock {\em arXiv preprint arXiv:1502.03167}, 2015.

\bibitem{khrulkov2017expressive}
V.~Khrulkov, A.~Novikov, and I.~Oseledets.
\newblock Expressive power of recurrent neural networks.
\newblock {\em arXiv preprint arXiv:1711.00811}, 2017.

\bibitem{krizhevsky2009learning}
A.~Krizhevsky.
\newblock Learning multiple layers of features from tiny images.
\newblock Technical report, Citeseer, 2009.

\bibitem{lecun1995convolutional}
Y.~LeCun, Y.~Bengio, et~al.
\newblock Convolutional networks for images, speech, and time series.
\newblock {\em The handbook of brain theory and neural networks},
  3361(10):1995, 1995.

\bibitem{lecun2015deep}
Y.~LeCun, Y.~Bengio, and G.~Hinton.
\newblock Deep learning.
\newblock {\em nature}, 521(7553):436, 2015.

\bibitem{lecun1990handwritten}
Y.~LeCun, B.~E. Boser, J.~S. Denker, D.~Henderson, R.~E. Howard, W.~E. Hubbard,
  and L.~D. Jackel.
\newblock Handwritten digit recognition with a back-propagation network.
\newblock In {\em Advances in neural information processing systems}, pages
  396--404, 1990.

\bibitem{martens2014expressive}
J.~Martens and V.~Medabalimi.
\newblock On the expressive efficiency of sum product networks.
\newblock {\em arXiv preprint arXiv:1411.7717}, 2014.

\bibitem{mikolov2011extensions}
T.~Mikolov, S.~Kombrink, L.~Burget, J.~{\v{C}}ernock{\`y}, and S.~Khudanpur.
\newblock Extensions of recurrent neural network language model.
\newblock In {\em 2011 IEEE International Conference on Acoustics, Speech and
  Signal Processing (ICASSP)}, pages 5528--5531. IEEE, 2011.

\bibitem{montufar2014number}
G.~F. Montufar, R.~Pascanu, K.~Cho, and Y.~Bengio.
\newblock On the number of linear regions of deep neural networks.
\newblock In {\em Advances in neural information processing systems}, pages
  2924--2932, 2014.

\bibitem{pascanu2013number}
R.~Pascanu, G.~Montufar, and Y.~Bengio.
\newblock On the number of response regions of deep feed forward networks with
  piece-wise linear activations.
\newblock {\em arXiv preprint arXiv:1312.6098}, 2013.

\bibitem{poggio2015theory}
T.~Poggio, F.~Anselmi, and L.~Rosasco.
\newblock I-theory on depth vs width: hierarchical function composition.
\newblock Technical report, Center for Brains, Minds and Machines (CBMM), 2015.

\bibitem{szegedy2015going}
C.~Szegedy, W.~Liu, Y.~Jia, P.~Sermanet, S.~Reed, D.~Anguelov, D.~Erhan,
  V.~Vanhoucke, and A.~Rabinovich.
\newblock Going deeper with convolutions.
\newblock In {\em Proceedings of the IEEE conference on computer vision and
  pattern recognition}, pages 1--9, 2015.

\bibitem{telgarsky2016benefits}
M.~Telgarsky.
\newblock Benefits of depth in neural networks.
\newblock {\em arXiv preprint arXiv:1602.04485}, 2016.

\bibitem{tucker1966some}
L.~R. Tucker.
\newblock Some mathematical notes on three-mode factor analysis.
\newblock {\em Psychometrika}, 31(3):279--311, 1966.

\end{thebibliography}

\newpage

\appendix

\section{Appendix A. Proofs}
\subsection{Proof of Theorem 1}
In section 3, we presented Tucker network and showed its expressive power. To prove Theorem 1, we firstly state and prove three lemmas which will be needed for the proofs.
\begin{lemma}
For any $(p,q)$ matricization of a tensor $\mathcal{A}$ whose CP rank is $Z$,
$$
rank[\mathcal{A}]^{(p,q)}\leq Z.
$$
\end{lemma}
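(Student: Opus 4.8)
The plan is to exploit the linearity of the matricization operation together with subadditivity of matrix rank. Since $\mathcal{A}$ has CP rank $Z$, by definition~(\ref{CP}) we may write $\mathcal{A}=\sum_{z=1}^{Z}\lambda_z\,\mathbf{a}^{z,1}\circ\mathbf{a}^{z,2}\circ\cdots\circ\mathbf{a}^{z,N}$ for suitable vectors $\mathbf{a}^{z,i}\in\mathbb{R}^{M_i}$. The map $\mathcal{A}\mapsto[\mathcal{A}]^{(p,q)}$ is a bijective \emph{linear} reshaping (it only permutes and regroups entries), so
\[
[\mathcal{A}]^{(p,q)}=\sum_{z=1}^{Z}\lambda_z\,\big[\mathbf{a}^{z,1}\circ\cdots\circ\mathbf{a}^{z,N}\big]^{(p,q)} .
\]
Hence it suffices to show that the $(p,q)$-matricization of a single rank-one tensor is a matrix of rank at most one, and then invoke $\rank(B_1+\cdots+B_Z)\le\sum_k\rank(B_k)$.

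First I would verify the rank-one claim. Fix $z$ and set $\mathcal{B}=\mathbf{a}^{z,1}\circ\cdots\circ\mathbf{a}^{z,N}$, so $\mathcal{B}_{d_1,\dots,d_N}=\prod_{i=1}^{N}\mathbf{a}^{z,i}_{d_i}$. With $\mathbf{p}=\{p_1,\dots,p_{n_1}\}$ and $\mathbf{q}=\{q_1,\dots,q_{n_2}\}$, the matrix $[\mathcal{B}]^{(p,q)}$ has rows indexed by tuples $(d_{p_1},\dots,d_{p_{n_1}})$ and columns by $(d_{q_1},\dots,d_{q_{n_2}})$, and its entry is $\big(\prod_{k=1}^{n_1}\mathbf{a}^{z,p_k}_{d_{p_k}}\big)\big(\prod_{\ell=1}^{n_2}\mathbf{a}^{z,q_\ell}_{d_{q_\ell}}\big)$. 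This factors as the outer product of the vector $\mathbf{a}^{z,p_1}\otimes\cdots\otimes\mathbf{a}^{z,p_{n_1}}$ with the vector $\mathbf{a}^{z,q_1}\otimes\cdots\otimes\mathbf{a}^{z,q_{n_2}}$, i.e.\ $[\mathcal{B}]^{(p,q)}=\big(\bigotimes_{k}\mathbf{a}^{z,p_k}\big)\big(\bigotimes_{\ell}\mathbf{a}^{z,q_\ell}\big)^{T}$, which has rank $\le 1$.

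Combining the two observations, $[\mathcal{A}]^{(p,q)}$ is a sum of $Z$ matrices each of rank at most one, so $\rank[\mathcal{A}]^{(p,q)}\le Z$, which is the claim. The only mildly delicate point is the bookkeeping in the index-to-matrix correspondence, i.e.\ checking that the regrouping of the product $\prod_i\mathbf{a}^{z,i}_{d_i}$ into the two Kronecker products respects exactly the row/column ordering used in the definition of $[\cdot]^{(p,q)}$; I expect this to be the main (though routine) obstacle, and it is handled purely by tracking multi-indices, with no analytic input needed.
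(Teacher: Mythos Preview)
Your proposal is correct and follows essentially the same approach as the paper's proof: write $\mathcal{A}$ in its CP form, use linearity of the $(p,q)$-matricization to pass the sum outside, and apply subadditivity of matrix rank together with the fact that the matricization of a single outer product is a rank-one matrix. Your write-up is in fact more explicit than the paper's, since you spell out the Kronecker-product factorization $[\mathbf{a}^{z,1}\circ\cdots\circ\mathbf{a}^{z,N}]^{(p,q)}=\big(\bigotimes_{k}\mathbf{a}^{z,p_k}\big)\big(\bigotimes_{\ell}\mathbf{a}^{z,q_\ell}\big)^{T}$, whereas the paper simply asserts the final equality.
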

\begin{proof}
 \begin{eqnarray*}
	 rank[\sum^Z_{z=1}\lambda^y_z \mathbf{a}^{z,1}\circ\mathbf{a}^{z,2} \circ\cdots \circ \mathbf{a}^{z,N}]^{(p,q)}
&=&rank \sum^Z_{z=1}\lambda^y_z [\mathbf{a}^{z,1}\circ \mathbf{a}^{z,2} \circ \cdots \circ\mathbf{a}^{z,N}]^{(p,q)}\\
		& \leq & \sum^Z_{z=1} rank[\mathbf{a}^{z,1}\circ \mathbf{a}^{z,2}\circ\cdots \circ \mathbf{a}^{z,N}]^{(p,q)}=Z.
	\end{eqnarray*}
\end{proof}
In Lemma 1, we give the lower bound of the CP-rank. If the matricization of a tensor $\mathcal{A}$ has matrix rank $Z$, then using the above lemma, we get that the CP rank of $\mathcal{A}$ is larger than $Z$.

For a $N$ order tensor  $\mathcal{A}$  who is in Tucker format, its matricization has the following form.
\begin{lemma}
	Given a $N$ order tensor $\mathcal{A}\in \mathcal{R}^{M\times M\times\cdots\times M}$ whose Tucker format is $\mathcal{A}=\mathcal{G}\times_1\mathbf{U}^{(1)}\times \cdots\times_{N} \mathbf{U}^{(N)}$, index subsets $\mathbf{p}=\{p_1,\cdots,p_{n_1}\}$ and $\mathbf{q}=\{q_1,\cdots,q_{n_2}\}$, then
	$$[A]^{(p,q)}=(\mathbf{U}^{(p_1)}\otimes \mathbf{U}^{(p_2)} \otimes \cdots\otimes \mathbf{U}^{(p_{n_1})})[\mathcal{G}]^{(p,q)}(\mathbf{U}^{(q_{n_2})}\otimes \mathbf{U}^{(q_{{n_2}-1})} \otimes \cdots\otimes \mathbf{U}^{(q_1)})^T.$$
\end{lemma}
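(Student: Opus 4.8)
The plan is to prove the identity \textbf{entrywise}: fix an arbitrary matrix entry of the left-hand side, expand it via the Tucker formula, regroup the factors according to the partition $\mathbf{p}\cup\mathbf{q}=[N]$, and recognize the result as the corresponding entry of the product on the right. Concretely, let $I=(d_{p_1},\dots,d_{p_{n_1}})$ index the rows of $[\mathcal{A}]^{(p,q)}$ and $K=(d_{q_1},\dots,d_{q_{n_2}})$ index its columns, so that $[\mathcal{A}]^{(p,q)}_{I,K}=A_{d_1,\dots,d_N}$. By (\ref{e1}),
$$A_{d_1,\dots,d_N}=\sum_{j_1,\dots,j_N} g_{j_1,\dots,j_N}\prod_{i=1}^N U^{(i)}_{d_i,j_i}=\sum_{J_{\mathbf{p}},\,J_{\mathbf{q}}} g_{j_1,\dots,j_N}\Bigl(\prod_{k=1}^{n_1} U^{(p_k)}_{d_{p_k},j_{p_k}}\Bigr)\Bigl(\prod_{l=1}^{n_2} U^{(q_l)}_{d_{q_l},j_{q_l}}\Bigr),$$
where I have only split the product over $[N]$ into its $\mathbf{p}$- and $\mathbf{q}$-parts and written $J_{\mathbf{p}}=(j_{p_1},\dots,j_{p_{n_1}})$, $J_{\mathbf{q}}=(j_{q_1},\dots,j_{q_{n_2}})$ for the two blocks of summation variables.

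Three identifications then finish the proof, each a direct consequence of the definition of the Kronecker product in Section 2. First, iterating the defining relation $(\mathbf{A}\otimes\mathbf{B})_{m_2(r-1)+v,\,n_2(s-1)+w}=a_{rs}b_{vw}$ (an induction on $n_1$) shows that $\prod_{k=1}^{n_1} U^{(p_k)}_{d_{p_k},j_{p_k}}$ is the $(I,J_{\mathbf{p}})$ entry of $\mathbf{U}^{(p_1)}\otimes\cdots\otimes\mathbf{U}^{(p_{n_1})}$, provided $I$ and $J_{\mathbf{p}}$ are flattened to single indices by the same convention used to form the matricization. Second, and in the same way, $\prod_{l=1}^{n_2} U^{(q_l)}_{d_{q_l},j_{q_l}}$ is the $(K,J_{\mathbf{q}})$ entry of $\mathbf{U}^{(q_{n_2})}\otimes\cdots\otimes\mathbf{U}^{(q_1)}$, i.e. the $(J_{\mathbf{q}},K)$ entry of its transpose; here the reversed factor order together with the transpose is exactly what reconciles the Kronecker ordering with the flattening convention, using $(\mathbf{A}\otimes\mathbf{B})^T=\mathbf{A}^T\otimes\mathbf{B}^T$. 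Third, by definition of the $(p,q)$-matricization applied to $\mathcal{G}$, the scalar $g_{j_1,\dots,j_N}$ is precisely the $(J_{\mathbf{p}},J_{\mathbf{q}})$ entry of $[\mathcal{G}]^{(p,q)}$. Substituting these into the displayed double sum and reading the sum over $J_{\mathbf{p}},J_{\mathbf{q}}$ as the matrix product $\bigl(\mathbf{U}^{(p_1)}\otimes\cdots\otimes\mathbf{U}^{(p_{n_1})}\bigr)[\mathcal{G}]^{(p,q)}\bigl(\mathbf{U}^{(q_{n_2})}\otimes\cdots\otimes\mathbf{U}^{(q_1)}\bigr)^T$ gives the claim.

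The genuinely delicate part is the index bookkeeping: one must fix once and for all the rule by which a multi-index is collapsed to a single index when forming $[\mathcal{A}]^{(p,q)}$, $[\mathcal{G}]^{(p,q)}$, and the two Kronecker products, and then check that this rule is used consistently in all three identifications above --- in particular, that it is precisely this convention that forces the $\mathbf{q}$-side factors to appear in the \emph{reversed} order $\mathbf{U}^{(q_{n_2})}\otimes\cdots\otimes\mathbf{U}^{(q_1)}$ (with a transpose), as opposed to the natural order seen on the $\mathbf{p}$-side. Apart from this, the only algebraic facts needed are the associativity and mixed-product property of $\otimes$ and the transpose identity $(\mathbf{A}\otimes\mathbf{B})^T=\mathbf{A}^T\otimes\mathbf{B}^T$; everything else is the routine bookkeeping just described. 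An essentially equivalent alternative is to obtain $[\mathcal{A}]^{(p,q)}$ by peeling off the modes of $\mathbf{p}$ (or of $\mathbf{q}$) one at a time from the classical single-mode identity $A_{(n)}=\mathbf{U}^{(n)}G_{(n)}\bigl(\mathbf{U}^{(N)}\otimes\cdots\otimes\mathbf{U}^{(1)}\bigr)^T$, collapsing the intermediate Kronecker products via $(\mathbf{A}\otimes\mathbf{B})(\mathbf{C}\otimes\mathbf{D})=(\mathbf{A}\mathbf{C})\otimes(\mathbf{B}\mathbf{D})$; this replaces the one-shot bookkeeping by a short induction.
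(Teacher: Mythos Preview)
Your proposal is correct and follows essentially the same approach as the paper: both expand $A_{d_1,\dots,d_N}$ via the Tucker formula, split the product $\prod_i U^{(i)}_{d_i,j_i}$ according to the partition $\mathbf{p}\cup\mathbf{q}$, and identify the resulting pieces with entries of the Kronecker products and of $[\mathcal{G}]^{(p,q)}$. The only cosmetic difference is that the paper introduces an intermediate tensor $\mathcal{H}$ (absorbing the $\mathbf{p}$-side first, then the $\mathbf{q}$-side), whereas you carry out both identifications in a single step; your explicit attention to the flattening convention and the reversed Kronecker order on the $\mathbf{q}$-side is, if anything, more careful than the paper's terse version.
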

\begin{proof}
	$$
	A_{d_1,\cdots,d_N}=\sum_{j_1,\cdots,j_N}g_{j_1,\cdots,j_N}U^{(1)}_{d_1,j_1}\cdots U^{(N)}_{d_N,j_N}.
	$$
	Therefore,
	\begin{eqnarray*}
&A_{d_{p_1},\cdots,d_{p_{n_1}},d_{q_1},\cdots,d_{q_{n_2}}}\\
	= &\sum_{j_{q_1},\cdots,j_{q_{n_2}}}(\sum_{j_{p_1},\cdots,j_{p_{n_1}}}g_{j_{p_1},\cdots,j_{p_{n_1}},j_{q_1},\cdots,j_{q_{n_2}}}U^{(p_1)}_{d_{p_1},j_{p_1}}\cdots U^{(p_{n_1})}_{d_{p_{n_1}},j_{p_{n_1}}})U^{(q_1)}_{d_{q_1},j_{q_1}}\cdots U^{(q_{n_2})}_{d_{q_{n_2}},j_{q_{n_2}}}\\
		=&\sum_{j_{q_1},j_{q_2},\cdots,j_{q_{n_2}}}H_{d_{p_1},d_{p_2},\cdots,d_{p_{n_1}},j_{q_1},j_{q_2},\cdots,j_{q_{n_2}}}(U^{(q_1)}_{d_{q_1},j_{q_1}}\cdots U^{(q_{n_2})}_{d_{q_{n_2}},j_{q_{n_2}}})
	\end{eqnarray*}
	where $H_{d_{p_1},\cdots,d_{p_{n_1}},j_{q_1},\cdots,j_{q_{n_2}}}=\sum_{j_{p_1},\cdots,j_{p_{n_1}}}g_{j_{p_1},\cdots,j_{p_{n_1}},j_{q_1},\cdots,j_{q_{n_2}}}U^{(p_1)}_{d_{p_1},j_{p_1}}\cdots U^{(p_{n_1})}_{d_{p_{n_1}},j_{p_{n_1}}} $.
	Then,
	\begin{eqnarray*}
		[\mathcal{A}]^{(p,q)}&=&[\mathcal{H}]^{(p,q)}(\mathbf{U}^{(q_{n_2})}\otimes \mathbf{U}^{(q_{{n_2}-1})} \otimes \cdots\otimes \mathbf{U}^{(q_1)})^T\\
		&=&(\mathbf{U}^{(p_1)}\otimes \mathbf{U}^{(p_2)} \otimes \cdots\otimes \mathbf{U}^{(p_{n_1})})[\mathcal{G}]^{(p,q)}(\mathbf{U}^{(q_{n_2})}\otimes \mathbf{U}^{(q_{{n_2}-1})} \otimes \cdots\otimes \mathbf{U}^{(q_1)})^T.
	\end{eqnarray*}
\end{proof}

For simplicity, we denote
$$[\mathcal{A}]^{(p,q)}=\mathbf{U}_{od}[\mathcal{G}]^{(p,q)}\mathbf{U}^T_{en},$$
where
$$
\mathbf{U}_{od}= \mathbf{U}^{(p_1)}\otimes \mathbf{U}^{(p_2)} \otimes \cdots\otimes \mathbf{U}^{(p_{n_1})}\in \mathbb{R}^{M^t\times J_{p_1}J_{p_2}\cdots J_{p_{n_1}}},
$$

$$
\mathbf{U}_{en}=\mathbf{U}^{(q_{n_2})}\otimes \mathbf{U}^{(q_{{n_2}-1})} \otimes \cdots\otimes \mathbf{U}^{(q_1)}\in \mathbb{R}^{M^t\times J_{q_1}J_{q_2}\cdots J_{q_{n_2}}},
$$

$$
[\mathcal{G}]^{(p,q)}=g_{j_{p_1},j_{p_2},\cdots,j_{p_{n_1}},j_{q_1},j_{q_2},\cdots,j_{q_{n_2}}}\in \mathbb{R}^{J_{p_1}J_{p_2}\cdots J_{p_{n_1}}\times J_{q_1}J_{q_2}\cdots J_{q_{n_2}}}.
$$
We get
$$
rank(\mathbf{U}_{od})=rank(\mathbf{U}^{(p_1)})\cdots rank(\mathbf{U}^{(p_{n_1})})=J_{p_1}J_{p_2}\cdots J_{p_{n_1}},
$$
$$
rank(\mathbf{U}_{en})=rank(\mathbf{U}^{(q_1)})\cdots rank(\mathbf{U}^{(q_{n_2})})=J_{q_1}J_{q_2}\cdots J_{q_{n_2}}.
$$
\begin{lemma}
	If each factor matrix of tensor $\mathcal{A}$ has full column rank, i.e., $\mathbf{U}^{(1)},\cdots,\mathbf{U}^{(N)}$ has full column rank, then $rank([\mathcal{A}]^{(p,q)})=rank([\mathcal{G}]^{(p,q)})$.
\end{lemma}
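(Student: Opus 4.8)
The plan is to combine Lemma~2 with two standard rank facts: first, that the rank of a Kronecker product is the product of the ranks of the factors, so that the Kronecker products $\mathbf{U}_{od}$ and $\mathbf{U}_{en}$ built from full-column-rank factors are themselves of full column rank; and second, that left-multiplication by a matrix of full column rank and right-multiplication by a matrix of full row rank both preserve the rank of any matrix.

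First I would invoke Lemma~2 to write
$$[\mathcal{A}]^{(p,q)}=\mathbf{U}_{od}\,[\mathcal{G}]^{(p,q)}\,\mathbf{U}_{en}^T,$$
with $\mathbf{U}_{od}=\mathbf{U}^{(p_1)}\otimes\cdots\otimes\mathbf{U}^{(p_{n_1})}$ and $\mathbf{U}_{en}=\mathbf{U}^{(q_{n_2})}\otimes\cdots\otimes\mathbf{U}^{(q_1)}$. Using $\operatorname{rank}(\mathbf{A}\otimes\mathbf{B})=\operatorname{rank}(\mathbf{A})\operatorname{rank}(\mathbf{B})$ repeatedly, and the hypothesis that every $\mathbf{U}^{(i)}$ has full column rank, I would conclude (as already recorded just before the statement) that $\operatorname{rank}(\mathbf{U}_{od})=J_{p_1}\cdots J_{p_{n_1}}$ equals its number of columns and $\operatorname{rank}(\mathbf{U}_{en})=J_{q_1}\cdots J_{q_{n_2}}$ equals its number of columns; hence $\mathbf{U}_{od}$ has full column rank and $\mathbf{U}_{en}^T$ has full row rank.

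Then I would apply the elementary linear-algebra lemma: for any matrices $P$, $M$, $Q$ with $P$ of full column rank and $Q$ of full row rank (and sizes compatible), $\operatorname{rank}(PMQ)=\operatorname{rank}(M)$. One clean way to justify this is via null spaces: $P$ injective gives $\operatorname{rank}(PM)=\operatorname{rank}(M)$ because $PMx=0\iff Mx=0$; and $Q$ surjective gives $\operatorname{rank}((PM)Q)=\operatorname{rank}(PM)$ because the column space of $PMQ$ equals that of $PM$. Taking $P=\mathbf{U}_{od}$, $M=[\mathcal{G}]^{(p,q)}$, $Q=\mathbf{U}_{en}^T$ yields $\operatorname{rank}([\mathcal{A}]^{(p,q)})=\operatorname{rank}([\mathcal{G}]^{(p,q)})$, which is the claim.

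There is essentially no hard obstacle here; the only point requiring a little care is making sure the full-column-rank property is correctly transported through the Kronecker products and that the transpose turns full column rank into full row rank, so that the rank-preservation lemma applies on the correct side. If one wants to avoid even citing the rank lemma, one can argue the two inequalities $\operatorname{rank}([\mathcal{A}]^{(p,q)})\le\operatorname{rank}([\mathcal{G}]^{(p,q)})$ (submultiplicativity of rank) and $\operatorname{rank}([\mathcal{G}]^{(p,q)})\le\operatorname{rank}([\mathcal{A}]^{(p,q)})$ (by left-multiplying with a left inverse of $\mathbf{U}_{od}$ and right-multiplying with a right inverse of $\mathbf{U}_{en}^T$, which exist by the full-rank hypothesis), and conclude by antisymmetry.
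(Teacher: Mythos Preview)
Your proposal is correct and follows essentially the same route as the paper: write $[\mathcal{A}]^{(p,q)}=\mathbf{U}_{od}[\mathcal{G}]^{(p,q)}\mathbf{U}_{en}^T$ via Lemma~2 and then strip off the outer factors using their full column rank. The paper's proof is the one-line chain $rank([\mathcal{A}]^{(p,q)})=rank(\mathbf{U}_{od}[\mathcal{G}]^{(p,q)}\mathbf{U}_{en}^T)=rank([\mathcal{G}]^{(p,q)}\mathbf{U}_{en}^T)=rank([\mathcal{G}]^{(p,q)})$; your version simply makes explicit the Kronecker-rank and rank-preservation facts that this chain relies on.
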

\begin{proof}
	$rank([\mathcal{A}]^{(p,q)})=rank(\mathbf{U}_{od}[\mathcal{G}]^{(p,q)}\mathbf{U}^T_{en})=rank([\mathcal{G}]^{(p,q)}\mathbf{U}^T_{en})=rank([\mathcal{G}]^{(p,q)})$.
\end{proof}

\textbf{Proof of Theorem 1}
\begin{proof}
	According to Lemma 1, it suffices to prove that the rank of $[\mathcal{A}^y]^{(p,q)}$ is at least $r$ almost everywhere. From Lemma 3, equivalently, we prove the rank of $[\mathcal{G}]^{(p,q)}$ is at least $r$ almost everywhere.
	
	For any $x\in \mathbb{R}^{J_{p_1}J_{p_2}\cdots J_{p_{n_1}}J_{q_1}J_{q_2}\cdots J_{q_{n_2}}}$, and all possible subsets $\mathbf{p}=\{p_1,p_2,\cdots,p_{n_1}\}$ and the corresponding compliment set $\mathbf{q}=\{q_1,q_2,\cdots, q_{ n_2}\}$, $n_1+n_2=N$. We let $[\mathcal{G}]^{(p,q)}(x)\in \mathbb{R}^{J_{p_1}J_{p_2}\cdots J_{p_{n_1}}\times J_{q_1}J_{q_2}\cdots J_{q_{n_2}}} $, which simply holds the elements of $x$.
Because $r=\max_{p,q}[\mathcal{G}]^{(p,q)}$ for all possible subsets $\mathbf{p},\mathbf{q}$. For all $x$, we have $rank([\mathcal{G}]^{(p,q)}(x))\leq r$. In  the following, we will prove that the Lebesgue measure of the space that $rank([\mathcal{G}]^{(p,q)}(x))<r$ is zero.
	
	Let $([\mathcal{G}]^{(p,q)}(x))_r$ be the top-left $r\times r$ sub matrix of $[\mathcal{G}]^{(p,q)}(x)$ and $det(([\mathcal{G}]^{(p,q)}(x))_r)$ is the determinant, as we know that $det(([\mathcal{G}]^{(p,q)}(x))_r)$ is a polynomial in the entries of $x$, according to theorem in\cite{caron2005zero}, it either vanishes on a set of zero measure or it is the zero polynomial. It implies that the Lebesgue measure of the space whose $det(([\mathcal{G}]^{(p,q)}(x))_r)=0$
	is zero, i.e., the Lebesgue measure of the space whose rank less than $r$ is zero. The result thus follows.
\end{proof}

\subsection{Proof of Theorem 2}
In this subsection, we will prove Theorem 2, the connection of Tucker tensor format and $2^L$ hierarchical Tucker tensor format. The expressive power of hierarchical Tucker tensor network has been well discussed in  \cite{cohen2016expressive}.

In Section 2, we defined  $(p,q)$-matricization which is a kind of general matricization. In the following, we simply consider the proper order matricization of tensor,  denoted as the matrix $[\mathcal{A}]_{p}$ here, for example, for $N=2t$, $[\mathcal{A}]_{p}\in \mathbb{R}^{M_1\cdots M_t \times M_{t+1}\cdots M_{2t}}$; for $N=2t+1$, $[\mathcal{A}]_{p}\in \mathbb{R}^{M_1\cdots M_t \times M_{t+1}\cdots M_{2t+1}}$.

The  $2^L$ hierarchical tensor decomposition format is given as follows:
\begin{eqnarray}\label{HT}
\mathcal{A} & = & \sum^{r_{L-1}}_{\alpha=1}a^{L}_\alpha \phi^{L-1,1,\alpha}\circ \phi^{L-1,2,\alpha}, \\
\phi^{l,t_l,\gamma_l} & = & \sum^{r_{l-1}}_{\alpha=1}a^{l,t_l,\gamma_l}_\alpha \phi^{l-1,2t_l-1,\alpha}\circ \phi^{l-1,2t_l,\alpha},
\quad t_l\in [2^{L-l}],\gamma_l\in[r_{l}],  l=2,\cdots,L-1. \nonumber\\
\phi^{1,t_1,\gamma_1} & = & \sum^{r_0}_{\alpha=1}a^{1,t_1,\gamma_1}_\alpha \mathbf{u}^{0,2t_1-1,\alpha}  \circ\mathbf{u}^{0,2t_1,\alpha},
\quad t_1\in [2^{L-1}],\gamma_1\in[r_{1}].  \nonumber \\
\end{eqnarray}

It is easy to check that $\{\phi^{l,t_l,\gamma_l}\}^{L-1}_{l=1}$ are tensors. The vectorization of these tensors $\{\phi^{l,t_l,\gamma_l}\}^{L-1}_{l=1}$, are denoted as $\{\phi^{l,t_l,\gamma_l}_c\}^{L-1}_{l=1}$ for simplicity. The matrix $[\phi^{l,t_l,\gamma_l}]_{p}$ of tensor $\phi^{l,t_l,\gamma_l}$   has $M_1M_2\cdots M_{2^{l-1}}$ rows and $M_{2^{l-1}+1}M_{2^{l-1}+2}\cdots M_{2^l}$ columns. Note that each dimension of $\mathcal{A}$ is equal to $M$, the matrix $[\phi^{l,t_l,\gamma_l}]_{p}$  contains $M^{2^{l-1}}$ rows and $M^{2^{l-1}}$ columns.
At the $(L-1)$-th layer, define,
\begin{equation}\label{L-1}
\begin{split}
\Phi^{L-1,1}=[\phi^{L-1,1,1}_c, \phi^{L-1,1,2}_c,\cdots,\phi^{L-1,1,\gamma_{L-1}}_c],\\
\Phi^{L-1,2}=[\phi^{L-1,2,1}_c, \phi^{L-1,2,2}_c,\cdots,\phi^{L-1,2,\gamma_{L-1}}_c].
\end{split}
\end{equation}
At the $l$-th layer, $l=2,\cdots, L-2$,
\begin{equation}\label{l}
\begin{split}
\Phi^{l,1}=[\phi^{l,1,1}_c, \phi^{l,1,2}_c,\cdots,\phi^{l,1,\gamma_{l}}_c],\\
\Phi^{l,2}=[\phi^{l,2,1}_c, \phi^{l,2,2}_c,\cdots,\phi^{l,2,\gamma_{l}}_c],\\
\cdots\\
\Phi^{l,2^{L-l}}=[\phi^{l,2^{L-l},1}_c, \phi^{l,2^{L-1},2}_c,\cdots,\phi^{l,2^{L-l},\gamma_{l}}_c].
\end{split}
\end{equation}
At the $1$-st layer,
\begin{equation}\label{1st}
\begin{split}
\Phi^{1,1}=[\phi^{1,1,1}_c, \phi^{1,1,2}_c,\cdots,\phi^{1,1,\gamma_{1}}_c],\\
\Phi^{1,2}=[\phi^{1,2,1}_c, \phi^{1,2,2}_c,\cdots,\phi^{1,2,\gamma_{1}}_c],\\
\cdots\\
\Phi^{1,2^{L-1}}=[\phi^{1,2^{L-1},1}_c, \phi^{1,2^{L-1},2}_c,\cdots,\phi^{1,2^{L-1},\gamma_{1}}_c].
\end{split}
\end{equation}

\textbf{Proof of Theorem 2}
\begin{proof}
	From (\ref{HT}), for the case $l=1$, we have
	$$
	\phi^{1,t_1,\gamma_1}=\sum^{r_0}_{\alpha=1}a^{1,t_1,\gamma_1}_\alpha \mathbf{u}^{0,2t_1-1,\alpha}\circ \mathbf{u}^{0,2t_1,\alpha},~~~t_1\in [2^{L-1}],\gamma_1\in[r_1].
	$$
	Denote $\mathbf{U}^{0,2t_1-1}\in \mathbb{R}^{M\times r_0}$  the matrix with columns $\{ \mathbf{u}^{0,2t_1-1,\alpha}\}^{r_0}_{\alpha=1}$, $\mathbf{U}^{0,2t_1}\in \mathbb{R}^{M\times r_0}$ the matrix with columns $\{ \mathbf{u}^{0,2t_1,\alpha}\}^{r_0}_{\alpha=1}$, and $\mathbf{D}^{1,t_1,\gamma_1}\in \mathbb{R}^{r_0\times r_0}$ the diagonal matrix with $a^{1,t_1,\gamma_1}_\alpha$ on its diagonal. We  rewrite
	$$
[\Phi^{1,1,\gamma_1}]_{p}=\mathbf{U}^{0,1}\mathbf{D}^{1,1,\gamma_1}(\mathbf{U}^{0,2})^T,\cdots,[\Phi^{1,2^{L-1},\gamma_1}]_{p}=\mathbf{U}^{0,2^L-1}\mathbf{D}^{1,2^{L-1},\gamma_1}(\mathbf{U}^{0,2^L})^T,
	$$
	therefore,
	$$
	\Phi^{1,1}=[\mathbf{U}^{0,2}\otimes \mathbf{U}^{0,1}](vec(\mathbf{D}^{1,1,1}),\cdots,vec(\mathbf{D}^{1,1,r_1})),\cdots,$$
	$$
	\Phi^{1,2^{L-1}}=[\mathbf{U}^{0,2^L}\otimes \mathbf{U}^{0,2^L-1}](vec(\mathbf{D}^{1,2^{L-1},1}),\cdots,vec(\mathbf{D}^{1,2^{L-1},r_1})).
	$$
	Here, for simplicity, we denote $(vec(\mathbf{D}^{l,t_l,1}),\cdots,vec(\mathbf{D}^{l,t_l,r_l}))$ as $\mathbf{P}^{l,t_l}$.
	
	For the case $l=2$,
	$$
	[\Phi^{2,1,\gamma_2}]_{p}=\Phi^{1,1}\mathbf{D}^{2,1,\gamma_2}(\Phi^{1,2})^T,\cdots,[\Phi^{2,2^{L-2},\gamma_2}]_{p}=\Phi^{1,2^{L-1}-1}\mathbf{D}^{2,2^{L-2},\gamma_2}(\Phi^{1,2^{L-1}})^T,
	$$
	then,
	$$
	[\Phi^{2,1,\gamma_2}]_{p}=[\mathbf{U}^{0,2}\otimes \mathbf{U}^{0,1}]\mathbf{P}^{1,1} \mathbf{D}^{2,1,\gamma_2}(\mathbf{P}^{1,2})^T[\mathbf{U}^{0,4}\otimes \mathbf{U}^{0,3}]^T,\cdots
	$$
	$$
	[\Phi^{2,2^{L-2},\gamma_2}]_{p}=[\mathbf{U}^{0,2^{L}-2}\otimes \mathbf{U}^{0,2^{L}-3}]\mathbf{P}^{1,2^{L-1}-1} \mathbf{D}^{2,2^{L-2},\gamma_2}(\mathbf{P}^{1,2^{L-1}})^T[\mathbf{U}^{0,2^{L}}\otimes \mathbf{U}^{0,2^{L}-1}]^T,
	$$
	therefore,
	$$
	\Phi^{2,1}=[\mathbf{U}^{0,4}\otimes \mathbf{U}^{0,3}\otimes\mathbf{U}^{0,2}\otimes \mathbf{U}^{0,1}](vec(\tilde{\mathbf{P}}^{2,1,1}),\cdots,vec(\tilde{\mathbf{P}}^{2,1,r_2})),\cdots,
	$$
	$$
	\Phi^{2,2^{L-2}}=[\mathbf{U}^{0,2^{L}}\otimes \mathbf{U}^{0,2^{L}-1}\otimes\mathbf{U}^{0,2^{L}-2}\otimes \mathbf{U}^{0,2^{L}-3}](vec(\tilde{\mathbf{P}}^{2,2^{L-2},1}),\cdots,vec(\tilde{\mathbf{P}}^{2,2^{L-2},r_2})),
	$$
	where $\tilde{\mathbf{P}}^{2,1,\gamma_2}=\mathbf{P}^{1,1} \mathbf{D}^{2,1,\gamma_2}(\mathbf{P}^{1,2})^T,\cdots,\tilde{\mathbf{P}}^{2,2^{L-2},\gamma_2}=\mathbf{P}^{1,2^{L-1}-1} \mathbf{D}^{2,2^{L-2},\gamma_2}(\mathbf{P}^{1,2^{L-1}})^T$.
	
	Assume for the case $l=l$, $\Phi^{l,t}$ always be written as
	$$
	\Phi^{l,1}=[\mathbf{U}^{0,2^{l}}\otimes \mathbf{U}^{0,2^{l}-1}\otimes\cdots\otimes \mathbf{U}^{0,1}][vec(\tilde{\mathbf{P}}^{l,1,1}),\cdots,vec(\tilde{\mathbf{P}}^{l,1,r_l})],
	\cdots,
	$$
	$$
	\Phi^{l,2^{L-l}}=[\mathbf{U}^{0,2^{L}}\otimes \mathbf{U}^{0,2^{L}-1}\otimes\cdots\otimes \mathbf{U}^{0,2^L-2^l+1}][vec(\tilde{\mathbf{P}}^{l,2^{L-l},1}),\cdots,vec(\tilde{\mathbf{P}}^{l,2^{L-l},r_l})].
	$$
	For the case $l=l+1$,
	$$
	[\Phi^{l+1,1,\gamma_{l+1}}]_{p}=\Phi^{l,1}\mathbf{D}^{l+1,1,\gamma_{l+1}}(\Phi^{l,2})^T,\quad \cdots,
    $$
    $$
    \cdots,\quad
	[\Phi^{l+1,2^{L-l-1},\gamma_{l+1}}]_{p}=\Phi^{l,2^{L-l}-1}\mathbf{D}^{l+1,2^{L-l-1},\gamma_{l+1}}(\Phi^{l,2^{L-l}})^T,
	$$
	then
\begin{eqnarray*}
	[\Phi^{l+1,1,\gamma_{l+1}}]_{p}=[\mathbf{U}^{0,2^{l}}\otimes\cdots\otimes \mathbf{U}^{0,1}]\tilde{\mathbf{P}}^{l,1}
\mathbf{D}^{l+1,1,\gamma_{l+1}}(\tilde{\mathbf{P}}^{l,2})^T[\mathbf{U}^{0,2^{l+1}}\otimes \mathbf{U}^{0,2^{l+1}-1}\otimes\cdots\otimes \mathbf{U}^{0,2^l+1}]^T,
\end{eqnarray*}
	\begin{eqnarray*}
\cdots,\quad [\Phi^{l+1,2^{L-l-1},\gamma_{l+1}}]_{p}&=&[\mathbf{U}^{0,2^{L}-2^l}\otimes \mathbf{U}^{0,2^{L}-2^l-1}\otimes\cdots\otimes \mathbf{U}^{0,2^{L}-2^{l+1}+1}]\tilde{\mathbf{P}}^{l,2^{L-l}-1}\\
		&~&\mathbf{D}^{l+1,2^{L-l-1},\gamma_{l+1}}(\tilde{\mathbf{P}}^{l,2^{L-l}})^T[\mathbf{U}^{0,2^{L}}\otimes \mathbf{U}^{0,2^{L}-1}\otimes\cdots\otimes \mathbf{U}^{0,2^L-2^l+1}]^T,
	\end{eqnarray*}
	where $(vec(\tilde{\mathbf{P}}^{l,t_l,1}),\cdots,vec(\tilde{\mathbf{P}}^{l,t_l,r_l}))$ is denoted as $\tilde{\mathbf{P}}^{l,t_l}$.
	
	From the above, we can deduce that
	\begin{eqnarray*}
		[\mathcal{A}]_{p}&=&[\mathbf{U}^{0,2^{L-1}}\otimes \mathbf{U}^{0,2^{L-1}-1}\otimes\cdots\otimes \mathbf{U}^{0,1}]\tilde{\mathbf{P}}^{L-1,1}\\
&~&\mathbf{D}^{L,1}(\tilde{\mathbf{P}}^{L-1,2})^T[\mathbf{U}^{0,2^{L-1}+1}\otimes \mathbf{U}^{0,2^{L-1}+2}\otimes\cdots\otimes \mathbf{U}^{0,2^L}]^T,
	\end{eqnarray*}
	i.e.,
	$$
	\mathcal{A}=\mathcal{G}\times_1 \mathbf{U}^{0,2^{L-1}}\times_2 \mathbf{U}^{0,2^{L-1}-1}\cdots\times_{2^{L-1}}\mathbf{U}^{0,1}(\times_{2^{L-1}+1}\mathbf{U}^{0,2^{L-1}+1}\times \cdots\times_{2^L}\mathbf{U}^{0,2^L}),
	$$
where $[\mathcal{G}]_{p}=\tilde{\mathbf{P}}^{L-1,1}\mathbf{D}^{L,1}
(\tilde{\mathbf{P}}^{L-1,2})^T$.
\end{proof}

\textbf{Proof of Theorem 3}

 From the proof of Theorem 2, we know that, $rank_{Tucker}\mathcal{A}\leq r_0$, here $r_0$ is the rank of $1$-st layer of hierarchical Tucker tensor. Because of the assumption $rank_{HT}\mathcal{A}\leq r$, we get that $r_0\leq r$. $rank_{Tucker}\mathcal{A}\leq r_0\leq r$ established.

\end{document}